\theoremstyle{plain}
\newtheorem{theorem}{Theorem}[section]
\newtheorem{proposition}[theorem]{Proposition}
\theoremstyle{definition}
\newtheorem{assumption}[theorem]{Assumption}
\numberwithin{equation}{section}
\numberwithin{theorem}{section}
\numberwithin{table}{section}
\numberwithin{figure}{section}
\DeclareMathOperator*{\argmax}{arg\,max}
\DeclareMathOperator*{\argmin}{arg\,min}
\def\@fnsymbol#1{\ensuremath{\ifcase#1\or *\or \dagger\or \ddagger\or
   \mathsection\or \mathparagraph\or \|\or **\or \dagger\dagger
   \or \ddagger\ddagger \else\@ctrerr\fi}}
\title{Calibration of distributionally robust empirical optimization models}
\author[Gotoh]{Jun-ya Gotoh\textsuperscript{$\dagger$}}
\author[Kim]{Michael Jong Kim\textsuperscript{$\ddagger$}}
\author[Lim]{Andrew E.B. Lim\textsuperscript{$*$}}
\dedicatory{\textsuperscript{$\dagger$}Department of Industrial and Systems Engineering, Chuo University, Tokyo, Japan. Email: jgoto@indsys.chuo-u.ac.jp \\ \textsuperscript{$\ddagger$}Sauder School of Business, University of British Columbia, Vancouver, Canada. Email: mike.kim@sauder.ubc.ca \\ \textsuperscript{$*$}Department of Analytics and Operations, Department of Finance, Institute for Operations Research and Analytics,  National University of Singapore, Singapore. Email: andrewlim@nus.edu.sg
}
\date{February 14, 2020}
\begin{document}

\begin{abstract}
We study the out-of-sample properties of robust empirical optimization problems with smooth $\phi$-divergence penalties and smooth concave objective functions, and develop a theory for data-driven calibration of the non-negative ``robustness parameter" $\delta$ that controls the size of the deviations from the nominal model. Building on the intuition that robust optimization reduces the sensitivity of the expected reward to errors in the model by controlling the spread of the reward distribution, we show that the first-order benefit of ``little bit of robustness" (i.e., $\delta$ small, positive) is a significant reduction in the variance of the out-of-sample reward while the corresponding impact on the mean is almost an order of magnitude smaller. One implication is that substantial variance (sensitivity) reduction  is possible at little cost if the robustness parameter is properly calibrated. To this end, we introduce the notion of a robust mean-variance frontier to select the robustness parameter and show that it can be approximated using resampling methods like the bootstrap. Our examples show that robust solutions resulting from ``open loop" calibration methods (e.g., selecting a $90\%$ confidence level regardless of the data and objective function) can be very conservative out-of-sample, while those corresponding to the robustness parameter that optimizes an estimate of the out-of-sample expected reward (e.g., via the bootstrap) with no regard for the variance are often insufficiently robust.
\end{abstract}

\maketitle


\section{Introduction}
\label{sec:introduction}

Empirical optimization (or sample average approximation (SAA)) is an approach for expected value optimization using the empirical distribution as an alternative model to the unknown true distribution. Misspecification can occur because of incorrect modeling assumptions or estimation uncertainty, and decisions that are made on the basis of an incorrect model can perform poorly out-of-sample if misspecification is ignored. Distributionally robust optimization (DRO) accounts for misspecification  in the in-sample problem by optimizing against worst-case perturbations from the ``nominal model."

DRO models are typically parameterized by an ``ambiguity parameter" $\delta$ that controls the size of the deviations from the nominal model in the worst-case problem. The ambiguity parameter  may appear as the confidence level of an uncertainty set or a penalty parameter that multiplies some measure of deviation between alternative probability distributions and the nominal in the worst-case objective. The parameter $\delta$ defines a family of worst-case solutions $\{x_n(\delta) : \delta\geq 0\}$, where $\delta=0$ is empirical/sample-average optimization (i.e., no robustness) with solutions becoming increasingly conservative as $\delta$ increases. (Here, $n$ denotes the size of the historical data set used to construct the robust optimization problem.) The choice of $\delta$ clearly determines the out-of-sample performance of the robust solution. The goal of this paper is to  study the out-of-sample properties of DRO and  to develop a theory for data-driven calibration of the ``robustness parameter" for  worst-case maximization problems with concave reward functions.

\subsection*{Summary of contributions}

\begin{enumerate}
\item {\bf DRO is a multi-objective problem.}
In the context of our model, we show that the sensitivity of the expected reward to worst-case deviations from the nominal is equal to the variance of the reward. Building on this insight and the results from \cite{GKL}, we show that DRO is a multi-objective problem where the goal is to maximize the {\it expected reward} while controlling the {\it worst-case sensitivity} (variance) to misspecification. The trade-off between mean and sensitivity (variance) is controlled by the robustness parameter $\delta$, which suggests that both objectives should be considered when selecting $\delta$.

\item {\bf Out-of-sample properties of DRO and worst-case sensitivity reduction.}
We show that solutions of SAA and DRO are asymptotically normal ($n$ large), that a little robustness ($\delta$ small) always reduces the sensitivity (variance) of the out-of-sample reward, and that  sensitivity (variance) reduction is almost an order of magnitude larger than the impact on the mean when $\delta$ is small. We also show that out-of-sample expected reward under the worst-case solution can sometimes be larger than that of SAA, though this improvement is typically small relative to sensitivity/variance reduction (see \cite{BLSZZ2013,KL,LorcaSun} for empirical observations of this phenomenon and \cite{AP} for the analysis in a one-dimensional quadratic setting).

\item {\bf Calibration of the robustness parameter.}  The problem of calibrating $\delta$ in DRO is analogous to the selection of the free parameters in various Machine Learning algorithms (e.g., the penalty parameter in LASSO, Ridge regression, SVM, etc.), which are commonly tuned by optimizing an estimate of out-of-sample loss via cross-validation, bootstrap or some other resampling approach. While $\delta$ could be selected by optimizing an estimate of the out-of-sample expected reward, this ignores the multi-objective character of DRO which suggests that both the mean and variance should be taken into account. We show how the out-of-sample robust mean-variance frontier can be constructed using resampling techniques. Consistent with our theory, our examples show that substantial variance reduction is possible with minimal impact on the expected reward if $\delta$ is chosen appropriately. The values of $\delta$ obtained from our approach are typically much smaller than those implied by standard confidence levels (e.g., $90\%, 95\%$) advocated in the literature, and larger than those obtained by maximizing only the mean.

\end{enumerate}

\subsection*{Other related literature}

Several papers \cite{DGN16,GKL,Lam,DN} discuss DRO from the perspective of empirical optimization with the variance of the reward as a regularizer. Most related is \cite{GKL} which studies the connection between DRO with (sufficiently smooth) $\phi$-divergence penalties and mean-variance problems when the robustness parameter is small. The most important difference is that \cite{GKL} studies the in-sample DRO problem while this paper is an out-of-sample analysis. Specifically, we characterize the impact of data variability on the DRO solution and the mean and variance of the out-of-sample reward, which forms the basis of resampling-based calibration  methods for selecting the robustness parameter $\delta$. Our analysis also reveals the mechanism by which the out-of-sample expected reward under DRO solutions can exceed that of SAA, a phenomena that has been observed empirically in a number of papers but not fully explained (we discuss this in more detail below).

A common theme in \cite{DGN16,GKL,Lam,DN} and the present paper is the relationship between DRO and variance regularization of SAA, and it is natural to ask {\it why} such a relationship even exists.  In this regard, we show that the in-sample variance of the reward is equal to the sensitivity of the mean under worst-case deviations from the nominal, so optimizing against the worst-case in DRO is almost the same as maximizing the expected reward while controlling worst-case sensitivity to model misspecification (variance). This formalizes discussion in \cite{GKL} and provides insight into why DRO and variance regularization of SAA are related. Variance regularization is also proposed in \cite{ELV} as a way to reduce the impact of data variability of the out-of-sample performance of the solution of mean-CVaR portfolio choice problems, though connections to worst-case optimization are not discussed; see also \cite{GKL} for the relationship between worst-case CVaR and variance regularization.

Also related is \cite{DN} which develops finite sample probabilistic guarantees for the out-of-sample expected reward generated by robust solutions.
One important difference is that \cite{DN} studies the out-of-sample  expected reward while the focus of our paper is the variance/sensitivity reduction properties of DRO and the implications for calibration. Additionally, while the probabilistic guarantees in \cite{DN} formalize the relationship between data size, model complexity, robustness, and out-of-sample expected reward, these results depend on quantities that are difficult to compute (e.g., the covering number or the VC Dimension), require a bounded objective function, and come with the usual concerns that performance bounds of this nature are loose \cite{AML}. Note too that \cite{DN} is a finite sample analysis whereas ours is asymptotic (large sample size), though calibration experiments for small data sets produce results that are consistent with our large sample theory. We also mention
\cite{DGN16}, which provides confidence intervals for the optimal objective value and shows consistency of solutions using Owen's empirical likelihood theory, and \cite{Lam} which studies the sensitivity of estimates of the expected value of random variables to worst-case perturbations of a simulation model.

Connections between robustness and regularization are studied in \cite{BC,BM,GCK,XCM} where various worst-case regression and classification problems are shown to be equivalent to the regularized regression and classification problems (Ridge, Lasso, etc.). Compared to this paper and also \cite{DN,DGN16,GKL,Lam}, one important difference is that the regularizers in \cite{BC,BM,GCK,XCM} act on the solution while the sensitivity/variance penalty in this paper acts on the reward. The equivalence with solution regularization in \cite{BC,BM,GCK,XCM} implies that worst-case regression and classification reduces the variance of the {\it solution} (regression estimate) by shrinking it towards the origin. It turns out, however, that this is not a general property of DRO, but is  driven by the objective function associated with regression and classification. Indeed, we  show in Section \ref{sec:stats_robust_sol} that there are well behaved concave reward functions where the impact of DRO on the SAA solution is opposite to that of a solution regularizer, biasing it away from the origin and increasing its variance.

We comment briefly on various approaches to calibrating DRO models in the literature. One approach is to use {\it high confidence uncertainty sets} that include the true data generating model with high probability, where the confidence level (typically $90\%$, $95\%$, or $99\%$) is a primitive of the model \cite{ben2013,bgk-SAA,DY-2010,DGN16,Lam}.  This has been refined by \cite{bgk13} which develops methods for finding the smallest such uncertainty set, though the confidence level remains a primitive of the model. One concern with this approach is that it is ``open loop", with confidence levels chosen independent of the data and the objective function. There is, however, no reason why these particular confidence levels should have anything to do with good out-of-sample performance.

A second approach selects the robustness parameter $\delta$ by {\it optimizing an estimate of out-of-sample expected reward} using a resampling procedure like bootstrap or cross-validation. While $\delta$ now depends on the data and the objective function (it is no longer ``open loop"), it ignores the multi-objective character of DRO. Indeed, our results show that sensitivity/variance reduction is the first-order benefit in DRO, so optimizing just the mean corresponds to selecting $\delta$ on the basis of (almost) a second-order effect. While optimizing the mean can produce a solution that has a higher out-of-sample expected reward than empirical optimization in certain applications, there are other applications where, by the same criteria, it is optimal not to be robust (i.e., $\delta=0$). In either case, there are typically larger values of $\delta$ where further reduction in the sensitivity/variance is possible with negligible impact on the mean. In contrast  to the ``high-confidence" approach where classical confidence levels of  $90\%$, $95\%$, or $99\%$ produce overly conservative solutions, this approach is likely to be  insufficiently robust.

Finally, in the {\it satisfycing approach} of \cite{BDS}, the decision maker specifies a target level $T$ and finds the ``most robust" decision that achieves this target under the worst case {\it in-sample} alternative model. That is, he/she chooses the largest robustness parameter $\delta$ under which the worst-case expected reward exceeds the target $T$, so the target $T$  is a primitive of the problem and the confidence level $\delta$ is optimized.

One surprising property of DRO is that the out-of-sample expected reward can sometimes exceed that of the SAA solution, even though it is a worst-case approach. This has been observed empirically \cite{BLSZZ2013,KL,LorcaSun}, and is studied in \cite{AP} for a quadratic reward and one-dimensional decision variable. We show that the difference between the out-of-sample expected reward under its ``true" maximizer and the SAA decision is nothing but the ``gap" in Jensen's inequality due to the variability of the solution (cf. Figure \ref{fig:Jensen1}), and that the improvement in the out-of-sample expected reward under DRO, should it occur, is precisely the reduction in this ``gap" under the DRO solution. Our results show that the change in the expected reward is likely to be small relative to the reduction in worst-case sensitivity, which is the primary benefit of the DRO model.

Finally, we mention the paper \cite{WZZ} which studies the asymptotic properties of stochastic optimization problems with risk-sensitive objectives.

\subsection*{Notation}

Given any function $f$ defined over the positive reals, we write $f(\delta) = o(\delta)$ (as $\delta \rightarrow 0)$ provided $\lim_{\delta\rightarrow0}f(\delta)/\delta = 0$, and we write $f(\delta) = O(\delta)$ (as $\delta \rightarrow 0)$ provided there exists $M, \delta_{0} > 0$ such that $f(\delta) \leq M\delta$ for all $\delta \leq \delta_{0}$. Given two stochastic processes $\{X_{n}\}$ and $\{Y_{n}\}$ defined on a common probability space, we write $X_{n} = o_{P}(Y_{n})$ provided for any $\epsilon > 0$, $\lim_{n\rightarrow\infty}\mathbb{P}(|X_{n}/Y_{n}|>\epsilon) = 0$.

\subsection*{Outline of the paper}

We introduce the in-sample DRO model and the notion of worst-case sensitivity in Section \ref{sec:REO}. We also show that worst-case sensitivity is equal to the variance of the reward for all sufficiently smooth $\phi$-divergence and that our DRO problem is equivalent to a multi-objective mean-sensitivity problem when the robustness parameter is small. We characterize the statistical properties of the in-sample solutions in Section \ref{sec:stats_robust_sol}. We also show that it is possible for DRO to increase the variance of the solution relative to empirical optimization. We characterize the mean and variance of the associated out-of-sample rewards for empirical optimization ($\delta = 0$) in Section \ref{sec:out-of-sample-emp} and robust optimization ($\delta > 0$) in Section \ref{sec:out-of-sample-robust}. In the case of DRO, we show that out-of-sample variance reduction is almost an order of magnitude larger than any change in the mean when the robustness parameter is small.  We introduce our calibration method, which is based on resampled estimates of the mean and variance, in Section \ref{sec:calibration}, and apply this to three examples in Section \ref{sec:applications}. We conclude in Section \ref{sec:conclusions}.


\section{Robust empirical optimization}
\label{sec:REO}

Let $f(x,\,Y)$ be a real-valued reward function, where $x\in{\mathbb R}^{d}$ is the decision variable and $Y\in\mathbb{R}^{l}$ a random vector with distribution ${\mathbb P}$. If the distribution $\mathbb P$ of $Y$ is known, we can solve
\begin{equation}
\max_x {\mathbb E}_{\mathbb P}\big[f(x,\,Y)\big],
\label{eq1}
\end{equation}
(assumptions on $f$ will be given later). In many situations, the distribution $\mathbb P$ is not known. Instead, we have historical data $Y_1,\cdots,\,Y_n$ generated {\it i.i.d.} from $\mathbb P$, and would like to use this data set to make a decision that performs well out-of-sample.

When the distribution $\mathbb P$ of $Y$ is not known, it is natural to optimize a sample average approximation of the objective
\begin{equation}
x_n(0) := \arg \max_x \Big\{ \mathbb{E}_{{\mathbb P}_n} \big[f(x,\,Y)\big] \equiv \frac{1}{n}\sum_{i=1}^n f(x,\,Y_i) \Big\},
\label{emp_n}
\end{equation}
where ${\mathbb P}_n$ is the empirical distribution associated with $Y_1,\cdots,\,Y_n$. In many situations, however,  $x_n(0)$ performs poorly out-of-sample due to differences between the empirical estimate ${\mathbb P}_n$ and the actual population distribution $\mathbb P$.

One approach to this problem is to optimize worst-case versions of the empirical optimization problem \eqref{emp_n}. Specifically, let $\delta>0$ be a positive constant and
\begin{equation}
x_n(\delta) := \arg \max_x \, \min_{{\mathbb Q}} \Big\{\mathbb{E}_{{\mathbb Q}}\big[f(x,Y)\big] + \frac{1}{\delta} \mathcal{H}_{\phi}({\mathbb Q} \,|\, {\mathbb{P}}_n)\Big\}
\label{eq:robust_empirical}
\end{equation}
be the solution of the DRO problem where
\begin{equation}
\mathcal{H}_\phi(\mathbb{Q} \,|\, {\mathbb{P}}_n):=
\left\{\begin{array}{ll}
\sum\limits_{i:{p}^{n}_i>0} {p}^n_i\phi\left(\frac{q_i}{{p}^{n}_i}\right),&\sum\limits_{i:{p}^{n}_i>0}q_i=1, q_i\geq 0,\\
+\infty,&\mbox{otherwise},\\
\end{array}\right.
\label{phi-div}
\end{equation}
is the $\phi$-divergence of ${\mathbb Q}=[q_1,\cdots,\,q_n]$ relative to ${\mathbb P}_n = [p_1^{n},\cdots,\,p_n^{n}]$. While there are important examples of $\phi$-divergence that are not smooth, we assume throughout this paper that $\phi$ is a convex function such that
\begin{equation}
\mathrm{dom}\,\phi\subset[0,\infty),\;
\phi(1)=0,\; \phi'(1)=0,\;  \mbox{and}\; \phi{''}(1)>0
\label{eq:phi-cond}
\end{equation}
(additional assumptions will be stated as required). The $\phi$-divergence $\mathcal{H}_\phi(\mathbb{Q} \,|\, {\mathbb{P}}_n)$ measures the deviation of the alternative distribution $\mathbb Q$ from the nominal ${\mathbb P}_n$ and the worst-case model \eqref{eq:robust_empirical} accounts for errors in the empirical distribution ${\mathbb P}_n$ by optimizing against worst-case perturbations. The size of the perturbations in \eqref{eq:robust_empirical} is controlled by the robustness parameter $\delta$ which determines the penalty on the adversary for deviating from the nominal. Note that $\delta=0$ gives the empirical model \eqref{emp_n} and  solutions become ``more conservative"  as $\delta$ increases.

It is shown in \cite{GKL} that if $\phi(z)$ is sufficiently smooth, the worst-case optimization problem \eqref{eq:robust_empirical} is almost the same as an empirical mean-variance problem. Specifically, if
$\phi(z)$ satisfies \eqref{eq:phi-cond} and is also twice continuously differentiable in the neighborhood of $z=1$, then
\begin{equation}
\min_{{\mathbb Q}}
\Big\{
\mathbb{E}_{{\mathbb Q}}\big[f(x,Y)\big] + \frac{1}{\delta} \mathcal{H}_{\phi}({\mathbb Q} \,|\, \mathbb{P}_n)
\Big\}
= \mathbb{E}_{{\mathbb P}_n}\big[f(x,Y)\big] - \frac{\delta}{2\phi''(1)}\mathbb{V}_{{\mathbb P}_n}\big[f(x,\,Y)\big]  + o(\delta),
\label{eq:robust-mv-general}
\end{equation}
where
\begin{eqnarray*}
\mathbb{V}_{{\mathbb P}_n}\big[f(x,\,Y)\big] := \frac{1}{n}\sum_{i=1}^n\Big(f(x,\,Y_i)-\mathbb E_{{\mathbb P}_n}\big[f(x,\,Y_i)\big]\Big)^2
\end{eqnarray*}
is the variance of the reward under the empirical distribution. This shows that worst-case optimization \eqref{eq:robust_empirical} is actually a multi-objective problem that trades off between maximizing the expected reward ${\mathbb E}_{{\mathbb P}_n}[f(x,\,Y)]$ and minimizing its variance $\mathbb{V}_{{\mathbb P}_n}\big[f(x,\,Y)\big]$.

While the mean-variance objective has been widely adopted, it is generally for reasons unrelated to model uncertainty, and it is natural to ask {\it why} it now appears in relation to a worst-case problem.  It turns out that the variance of the reward measures worst-case sensitivity of the nominal model to misspecification. Intuitively, a decision $x$ is sensitive to misspecification if small deviations from the nominal ${\mathbb P}_n$ can have a big impact on the expected reward ${\mathbb E}_{{\mathbb P}_n}[f(x,\,Y)]$. Sensitivity is large if the reward distribution has a large spread since perturbations that affect the tail can have a big impact on the mean.
The worst-case optimizer \eqref{eq:robust-mv-general} tries to optimize the expected reward while controlling the worst-case sensitivity to misspecification (spread).

This intuition can be formalized. Given a nominal model ${\mathbb P}_n$ and reward $f(x,\,Y)$, consider the family of distributions  $\{{\mathbb Q}(\delta):\,\delta\geq 0\}$ where
\begin{align}
{\mathbb Q}(\delta) &
:= \left\{
\begin{array}{cl}
\begin{displaystyle}\arg\min_{\mathbb Q}\Big\{  \sum_{i=1}^nq_i f(x,\,Y) +  \frac{1}{\delta} \sum_{i=1}^n {p}^n_i \phi\Big(\frac{q_i}{{p}^n_i}\Big)\Big\}, \end{displaystyle}& \delta>0,\\ [10pt]
{\mathbb P}_n, & \delta=0.
\end{array}\right.
\label{eq:worst-case-Q}
\end{align}
Intuitively, $\{{\mathbb Q}(\delta):\,\delta\geq 0\}$ is a family of worst-case deviations from the nominal, with the size of the deviation increasing in $\delta$ (corresponding to a smaller penalty on the $\phi$-divergence in \eqref{eq:worst-case-Q}). Although we have suppressed this from the notation, $\{{\mathbb Q}(\delta):\,\delta\geq 0\}$ depends on the nominal ${\mathbb P}_n$ as well as the reward $f(x,\,Y)$.
Given \eqref{eq:worst-case-Q}, we define the worst-case sensitivity of the expected reward ${\mathbb E}_{{\mathbb P}_n}[f(x,\,Y)]$ relative to nominal model ${\mathbb P}_n$ as
\begin{align}
{\mathcal S}_{{\mathbb P}_n}(f(x,\,Y)) &
:= - \frac{\mathrm{d}}{\mathrm{d}\delta}{\mathbb E}_{{\mathbb Q}(\delta)}[f(x,\,Y)]\Big|_{\delta=0} \nonumber  \\[5pt]
& = - \lim_{\delta\downarrow 0}\frac{{\mathbb E}_{{\mathbb Q}(\delta)}[f(x,\,Y)] -{\mathbb E}_{{\mathbb P}_n}[f(x,\,Y)]}{\delta}.
\label{eq:sensitivity}
\end{align}
The following result shows that the worst-case sensitivity is equal to the variance. We require the following assumption.
\begin{assumption}
\label{ass2}
$\phi:{\mathbb R}\rightarrow{\mathbb R}\cup\{+\infty\}$ is a closed convex function such that $\phi(z)\geq\phi(1)=0$ for $z\geq 0$, and $\phi(z)=+\infty$ for $z<0$, and twice continuously differentiable around $z=1$ with $\phi''(1)>0$.
\end{assumption}
The proof of the following result can be found in Appendix \ref{sec:sensitivity}.
\begin{theorem} \label{thm:sensitivity}
Suppose $\phi(z)$ satisfies Assumption \ref{ass2}. Let $\{{\mathbb Q}(\delta):\,\delta \geq 0\}$ be the family of worst case measures defined by \eqref{eq:worst-case-Q}. Then the worst-case sensitivity of the expected reward ${\mathbb E}_{{\mathbb P}_n}[f(x,\,Y)]$ relative to the nominal ${\mathbb P}_n$ \eqref{eq:sensitivity} satisfies
\begin{equation*}
{\mathcal S}_{{\mathbb P}_n}(f(x,\,Y)) = \frac{1}{\phi''(1)}{\mathbb V}_{{\mathbb P}_n}[f(x,\,Y)].
\end{equation*}
\end{theorem}

It can be also shown that the sensitivity of the expected reward under different models of misspecification (e.g., Wasserstein and CVaR) corresponds to different measures of spread, and that the associated worst-case problems are mean-sensitivity problems. These results will be reported elsewhere. Worst-case sensitivity is also discussed in \cite{Lam} for simulated estimates on an expectation when  the family of alternative models is defined through a constraint on relative entropy. In this regard, Theorem \ref{thm:sensitivity} holds for all sufficiently smooth $\phi$-divergence and not just for relative entropy. Variance penalties are proposed in \cite{ELV} as a way to control the sensitivity of solutions of an empirical mean-CVaR portfolio choice problem to data variability, but the connection to worst-case optimization is not discussed.

\subsection*{Calibration}

Robust optimization \eqref{eq:robust_empirical} defines a family of policies $\{x_n(\delta):\,\delta\geq 0\}$ that balance between expected reward and sensitivity, with  the weight on sensitivity increasing in $\delta$. Our eventual goal is to identify values of the parameter $\delta$ such that the corresponding solution $x_n(\delta)$ performs well out-of-sample. While parameters like $\delta$ are commonly chosen by optimizing some estimate of the out-of-sample expected reward using bootstrap or cross-validation (e.g., the regularization parameter in Lasso or Ridge regression), the characterization \eqref{eq:robust-mv-general} shows that worst-case optimization is a multi-objective problem, which suggests that estimates of both the mean and the variance (i.e., sensitivity) should be used to select $\delta$. We provide guidance for this approach by characterizing the impact of the robustness parameter on the mean and the variance of the out-of-sample reward.

\subsection*{The constrained DRO model}
An alternative to the ``penalty version" of the DRO model \eqref{eq:robust_empirical} defines set of alternative models defined in terms of a hard constraint on the deviation measure
\begin{eqnarray}
\left\{\begin{array}{l}
\begin{displaystyle}\max_x\min_{\mathbb Q} {\mathbb E}_{\mathbb Q}[f(x,\,Y)]\end{displaystyle} \\ \vspace{-0.25cm} \\ \mbox{Subject to:} \; {\mathcal H}_\phi({\mathbb Q}\,|\,{\mathbb P}_n) \leq \epsilon.
\end{array}\right.
\label{eq:constrained}
\end{eqnarray}
As in the penalty version, this leads to a family of decisions $\{x_n(\epsilon):\,\epsilon \geq 0\}$ parameterized by the constraint threshold $\epsilon$ in \eqref{eq:constrained}. With sufficient regularity the penalty and constrained versions are related via convex duality in that the solution of \eqref{eq:robust_empirical} solves \eqref{eq:constrained} for some choice of $\epsilon$ and vice versa; see Corollary 3 of \cite{ben2013} for more details. Specifically, the solution of \eqref{eq:constrained} is obtained by solving \eqref{eq:robust_empirical} with $\delta\equiv\delta_n(\epsilon)$
\begin{eqnarray}
x_n(\epsilon) = \argmax_x \Big\{\min_{\mathbb Q}{\mathbb E}_{\mathbb Q}[f(x,\,Y)] + \frac{1}{\delta_n(\epsilon)}{\mathcal H}_\phi({\mathbb Q}\,|\,{\mathbb P}_n)\Big\}
\label{eq:const-x}
\end{eqnarray}
where
\begin{eqnarray}
\delta_n(\epsilon) := \argmax_{\delta\geq 0} \Big\{\max_x \min_{\mathbb Q}{\mathbb E}_{\mathbb Q}[f(x,\,Y)] + \frac{1}{\delta}{\mathcal H}_\phi({\mathbb Q}\,|\,{\mathbb P}_n)- \frac{\epsilon}{\delta}\Big\},
\label{eq:const-delta}
\end{eqnarray}
and both models generate the same family of solutions. For this reason, we focus on developing an approach to selecting $\delta$ in the penalty formulation of the model with the understanding that it can be applied to selecting $\epsilon$ in the constrained problem.

\subsection*{DRO Caveat}

For the remainder of the paper, unless otherwise stated, all claims, findings, and insights provided are for DRO problems of the form (\ref{eq:robust_empirical}). Conditions for the $\phi$-divergence penalty and the reward function $f(x,\,Y)$ will be stated as needed.


\section{Statistics of robust solutions: Asymptotic normality}
\label{sec:stats_robust_sol}

The results in Section \ref{sec:REO} show that robust optimization is a multi-objective problem that balances between maximizing the expected reward and minimizing its variance.
In this section, we characterize the statistical properties of the robust solution $x_n(\delta)$. These results will be used in the following sections to study the out-of-sample mean and variance (sensitivity) of the reward.

The results in this section use general results on the consistency (Theorem 5.4, \cite{SDR}) and asymptotic normality (Theorem 5.21, \cite{vdV}) of sample average optimizers, which are reproduced in Appendix \ref{sec:AN_general}. The following regularity assumptions on $f(x,\,Y)$ will generally be imposed. The results from \cite{SDR} and \cite{vdV} can be applied under more relaxed assumptions, though this does not lead to additional insights.

Let $f:{\mathbb R}^d\times{\mathbb R}^l \rightarrow {\mathbb R}$, $Y$ is an  $\mathbb{R}^{l}$-valued random vector with distribution $\mathbb{P}$, and
\begin{equation}
x^{\star}(0) := \arg \max_x \,\Big\{ \mathbb{E}_{\mathbb P}\left[f(x,\,Y)\right] \Big\}
\label{eq:opt_dgm}
\end{equation}
be the maximizer of the expected reward under the population distribution $\mathbb P$.
\begin{assumption} \label{ass1}
The function $f(x,\,Y)$ and random vector $Y\sim {\mathbb P}$ are such that
\begin{itemize}
\item $f(x,\,Y)$ is strictly concave and twice continuously differentiable in $x\in\mathbb{R}^d$ for ${\mathbb P}$-almost surely every $Y\in\mathbb{R}^{l}$;
\item for each fixed $x\in\mathbb{R}^{d}$, the mappings $y\mapsto f(x,y), \nabla_{x}f(x,y), \nabla^{2}_{x}f(x,y)$, $y\in\mathbb{R}^{l}$, are measurable and all moments of the random variables $f(x,Y)$, $\nabla_{x}f(x,Y)$, $\nabla^{2}_{x}f(x,Y)$ exist;
\item there exists a solution $x^\star(0)$ of  \eqref{eq:opt_dgm}.
\end{itemize}
\end{assumption}

It follows from Assumption \ref{ass1} is that the Hessian matrix $\mathbb{E} _{\mathbb P}\big[\nabla_x^2 f (x,\,Y)\big]$ is negative definite for every $x$, and that $x^\star(0)$ is unique.
Observe too that Assumption \ref{ass1} is not required for  \eqref{eq:robust-mv-general} to hold.  Likewise, the relationship between worst-case optimization and mean-variance optimization still holds if $x$ is constrained, though constraints will affect the efficiency of the mean-sensitivity trade-off for the reward that can be achieved.

The following result characterizes the asymptotic properties of the solution of the empirical optimization problem, and follows immediately from results in \cite{SDR} and \cite{vdV} which are stated in Appendix \ref{sec:AN_general}.

\begin{proposition} \label{prop:normality_empirical}
Suppose that $f(x,\,Y)$ satisfies Assumption \ref{ass1}.
Let $x^{\star}(0)$ be the solution of the optimization problem \eqref{eq:opt_dgm}. Then the solution $x_n(0)$ of the empirical optimization problem \eqref{emp_n} is consistent, $x_n(0)\overset{P}{\longrightarrow}x^{\star}(0)$, and asymptotically normal
\begin{equation*}
\sqrt{n}\Big(x_n(0)-x^{\star}(0)\Big) \overset{D}{\longrightarrow} N\left(0,\,\xi(0)\right), \; \mbox{as} \; n\rightarrow\infty,
\end{equation*}
where the covariance matrix
\begin{equation*}
\xi(0)  =
\mathbb{V}_{\mathbb{P}}
\Big[
{\mathbb{E} _{\mathbb P}\big[\nabla_x^2 f (x^{\star}(0),\,Y)\big]}
^{-1} \nabla_x f (x^{\star}(0),\,Y)\Big].
\end{equation*}
In particular
\begin{eqnarray*}
\sqrt{n}\Big(x_n(0)-x^{\star}(0)\Big) = \sqrt{\xi(0)}Z + o_P(1),
\end{eqnarray*}
where $\sqrt{\xi(0)}$ is a $d \times d$ matrix such that $\sqrt{\xi(0)}\sqrt{\xi(0)}'=\xi(0)$ and $Z$ is a $d$-dimensional standard normal random vector.
\end{proposition}

We now consider the asymptotic distribution of the solution $x_n(\delta)$ of the robust problem \eqref{eq:robust_empirical}. The dual characterization of $\phi$-divergence implies that $x_n(\delta)$ can be obtained by solving
\begin{eqnarray}
(x_n(\delta),\,c_n(\delta)) =\arg \min_{x,\,c}\left\{c+\frac{1}{\delta}\mathbb{E}_{\mathbb{P}_n}\Big[\phi^*\Big(\delta(-f(x,\,Y)-c)\Big)\Big]\right\},
\label{eq:robust_empirical_1}
\end{eqnarray}
where
\begin{eqnarray*}
\phi^*(\zeta):=\sup_z\big\{z\zeta-\phi(z)\big\}
\end{eqnarray*}
is the convex conjugate of $\phi(z)$.
If $\psi:{\mathbb R}^d \times {\mathbb R}\rightarrow {\mathbb R}^{d+1}$ is given by
\begin{eqnarray}
\psi(x,\,c) :=
\left[
\begin{array}{c}
(\phi^*)'\big(-\delta\left(f(x,\,Y)+c\right)\big)
\nabla_xf(x,\,Y)  \\[5pt]
\displaystyle
-\frac{\phi''(1)}{\delta} \big((\phi^*)'\big(-\delta\left(f(x,\,Y)+c\right)\big) -1\big)
\end{array}
\right],
\label{eq:psi}
\end{eqnarray}
the first-order conditions for $(x_n(\delta),\,c_n(\delta))$ are\footnote{We have added a scaling constant $-\frac{\phi''(1)}{\delta} $ in the second equation. This does not affect the solution of the first-order conditions, but makes the subsequent analysis more convenient.}
\begin{eqnarray}
{\mathbb E}_{{\mathbb P}_n}\big[\psi(x,\,c)\big]=\left[
\begin{array}{c}
{\mathbb E}_{{\mathbb P}_n}\Big[(\phi^*)'\big(-\delta\left(f(x,\,Y)+c\right)\big)
\nabla_xf(x,\,Y)\Big] \\ [8pt]
\displaystyle
{\mathbb E}_{{\mathbb P}_n}\Big[-\frac{\phi''(1)}{\delta} \big((\phi^*)'\big(-\delta\left(f(x,\,Y)+c\right)\big) -1\big)\Big]
\end{array}
\right]
=
\left[
\begin{array}{c}
0 \\
0
\end{array}
\right].
\label{eq:foc_delta}
\end{eqnarray}
Similarly, let
\begin{align}
(x^{\star}(\delta),\,c^{\star}(\delta)) & =  \arg \min_{x,\,c}
\left\{c+\frac{1}{\delta}\mathbb{E}_{\mathbb{P}} \Big[\phi^*\Big(\delta(-f(x,\,Y)-c)\Big)\Big] \right\}
\label{eq:rob_dgm}
\end{align}
be the solution of the population version of the robust problem \eqref{eq:robust_empirical_1}
with first-order conditions
\begin{eqnarray}
{\mathbb E}_{\mathbb P}[\psi(x, \, c)] =
\left[
\begin{array}{c}
{\mathbb E}_{{\mathbb P}}\Big[(\phi^*)'\Big(-\delta\big(f(x,\,Y)+c\big)\Big)\nabla_x f(x,\,Y)\Big] \\[5pt]
\displaystyle
{\mathbb E}_{{\mathbb P}}\Big[-\frac{\phi''(1)}{\delta}\left((\phi^*)'\big(-\delta\left(f(x,\,Y)+c\right)\big)-1\right)\Big]
\end{array}
\right]
=
\left[
\begin{array}{c}
0 \\
0
\end{array}
\right].
\label{eq:foc_limit}
\end{eqnarray}

Let $\xi(\delta) \in {\mathbb R}^{d\times d}$,\,$\eta(\delta)\in{\mathbb R}$, and $\kappa(\delta)\in{\mathbb R}^{d\times 1}$ be the entries of the matrix
\begin{eqnarray*}
 V(\delta) := \left[\begin{array}{cc} \xi(\delta) &
\kappa(\delta) \\ \vspace{-0.25cm}\\
\kappa(\delta)' & \eta(\delta)\end{array}\right] = A^{-1} B {A^{-1}}' ,
\end{eqnarray*}
where
\begin{align*}
A & := {\mathbb E}_{\mathbb{P}}[- J_\psi (x^{\star}(\delta),\,c^{\star}(\delta))] \in {\mathbb R}^{(d+1)\times (d+1)},\\
B & := {\mathbb E}_{\mathbb P}[\psi(x^{\star}(\delta),\,c^{\star}(\delta))\,\psi(x^{\star}(\delta),\,c^{\star}(\delta))'] \in {\mathbb R}^{(d+1) \times (d+1)},
\end{align*}
and $J_\psi$ denotes the Jacobian matrix of $\psi$.

The first part of the following result shows that $(x_n(\delta),\,c_n(\delta))$ is consistent and jointly asymptotically normal while the second part characterizes its limiting distribution in terms of the limiting distribution of the empirical distribution $x_n(0)$ when $\delta$ is small. Consistency and asymptotic normality can be established using the general results from \cite{SDR} and \cite{vdV} (Appendix \ref{sec:AN_general}). However, both $f(x,\,Y)$ as well as $\phi(x)$ need to be sufficiently regular, so in contrast to Proposition \ref{prop:normality_empirical}, Assumption \ref{ass2} will be required in addition to Assumption \ref{ass1}.
\begin{theorem}\label{thm:normality2}
Suppose $\phi(z)$ satisfies Assumption \ref{ass2} and $f(x,\,Y)$ satisfies Assumption \ref{ass1}.
Let $(x_n(\delta),\,c_n(\delta))$ be the solution of the robust empirical optimization problem
\eqref{eq:robust_empirical_1} and $(x^{\star}(\delta),\,c^{\star}(\delta))$ solve the robust problem \eqref{eq:rob_dgm} under the population distribution $\mathbb P$. Then $(x_n(\delta),\,c_n(\delta))$ is consistent with $(x_n(\delta),\,c_n(\delta)) \overset{P}{\longrightarrow} (x^{\star}(\delta),\,c^{\star}(\delta))$ and jointly asymptotically normal with
\begin{eqnarray}
\sqrt{n} \Big(x_n(\delta)-x^{\star}(\delta)\Big) & \overset{D}{\longrightarrow} & N(0,\,\xi(\delta)), \nonumber \\
\sqrt{n} \Big(c_n(\delta)-c^{\star}(\delta)\Big) & \overset{D}{\longrightarrow} & N(0,\,\eta(\delta)),
\label{eq:AN}\\
n \, \mathrm{Cov}_{\mathbb{P}}\Big[x_n(\delta),\,c_n(\delta)\Big] & \longrightarrow &  \kappa(\delta), \nonumber
\end{eqnarray}
as $n\rightarrow\infty$.
In particular,
\begin{eqnarray*}
\sqrt{n} \Big(x_n(\delta)-x^{\star}(\delta)\Big)= \sqrt{\xi(\delta)} Z + o_P(1)
\end{eqnarray*}
where $\sqrt{\xi(\delta)}$ is a $d \times d$ matrix such that $\sqrt{\xi(\delta)}\sqrt{\xi(\delta)}'=\xi(\delta)$ and $Z$ is a $d$-dimensional standard normal random vector.
Furthermore
\begin{align}
x^{\star}(\delta) & = x^{\star}(0) + \pi \delta + o(\delta),\label{eq:rob_asymp_bias}\\
c^{\star}(\delta) & = - {\mathbb E}_{\mathbb P} [f(x^{\star}(0),\,Y)]  + O(\delta),\nonumber
\end{align}
where
\begin{align}
\pi & := \frac{1}{\phi^{''}(1)} \, \Big({\mathbb{E}_{\mathbb{P}}[\nabla^2_x f(x^{\star}(0),\,Y)]}\Big)^{-1} \mathrm{Cov}_{\mathbb{P}}\Big[{\nabla_x f(x^{\star}(0),\,Y)}, \, f(x^{\star}(0),\,Y)\Big],
\label{eq:pi}
\end{align}
and
\begin{eqnarray*}
V(\delta) = \left[\begin{array}{cc} \xi(\delta) &
\kappa(\delta) \\ \vspace{-0.25cm}\\
\kappa(\delta)' & \eta(\delta)\end{array}\right]  = \left[\begin{array}{cc} \xi(0) &
\kappa(0) \\ \vspace{-0.25cm}\\
\kappa(0)' & \eta(0)\end{array}\right] + O(\delta),
\end{eqnarray*}
where
\begin{align*}
\xi(0) & =   {\mathbb V}_{\mathbb P}\Big[\Big({\mathbb E}\big[\nabla^2_x f(x^{\star}(0),\,Y)\big]\Big)^{-1}{\nabla_x f(x^{\star}(0),\,Y)}\Big], \\
\eta(0) & =  {\mathbb V}_{\mathbb P}[f(x^{\star}(0),\,Y)],\\
\kappa(0) & =   \Big({\mathbb E}_{\mathbb{P}}[\nabla^2_x f(x^{\star}(0),\,Y)]\Big)^{-1} \mathrm{Cov}_{\mathbb{P}}\big[{\nabla_x f(x^{\star}(0),\,Y)}, \, f(x^{\star}(0),\,Y)\big].
\end{align*}
\end{theorem}

\begin{proof}
Consistency and asymptotic normality \eqref{eq:AN}) of $(x_n(\delta),\,c_n(\delta))$ follow from general results on the distributional properties of the solutions of sample average problems in \cite{SDR} and \cite{vdV}; see Appendix \ref{sec:AN_general} and \ref{sec:consistency}.  To prove \eqref{eq:rob_asymp_bias}, we first show that $(x^\star(\delta),\,c^\star(\delta))$ is continuously differentiable in a neighborhood of $\delta=0$, following which we use the first-order conditions \eqref{eq:foc_delta} for the optimization problem \eqref{eq:robust_empirical_1} to derive the Taylor series expansion \eqref{eq:rob_asymp_bias} and the expression for the asymptotic bias \eqref{eq:pi}. Details can be found in the Appendix \ref{sec:CLT}.
\end{proof}

Theorem \ref{thm:normality2} shows (asymptotically) that robust optimization adds a bias $\pi$ to the empirical solution where the magnitude of the bias is determined by the robustness parameter $\delta$. It can be shown that  $\pi$  optimizes a trade-off between the loss of expected reward and the reduction in variance:
\begin{align*}
\pi & = \arg\max_\pi \Big\{ \frac{\delta^2}{2} \Big( \pi'\,{\mathbb E}_{\mathbb P}\Big[\nabla^2_x f (x^{\star}(0),\,Y)\Big] \pi - \frac{2}{\phi''(1)}
\pi'\,\mathrm{Cov}_{\mathbb{P}}\big[ f(x^{\star}(0),\,Y),\,\nabla_x f(x^{\star}(0),\,Y) \big]  \Big) \\
&
 \hspace{3em}\equiv \mathbb{E}_{{\mathbb P}}\big[f(x^{\star}(0) + \delta \pi ,Y)\big] -\mathbb{E}_{{\mathbb P}}\big[f(x^{\star}(0),Y)\big]
 \\ & \hspace{4em} - \frac{\delta}{2\phi''(1)}\Big(\mathbb{V}_{{\mathbb P}}\big[f(x^{\star}(0) + \delta \pi,\,Y)\big] - \mathbb{V}_{{\mathbb P}}\big[f(x^{\star}(0),\,Y)\big]\Big) + o(\delta^2)\Big\}.
\end{align*}

\subsection*{Variability of the solution}

It has been shown \cite{BC,BM,GCK,XCM} that various worst-case regression and classification problems are equivalent to regularized versions of these problems (i.e., regression/classification with Ridge, Lasso, etc.). One implication is that robustness induces the bias-variance trade-off associated with regularization, leading to a reduction in the variance of the {\it solution}  by shrinking it towards the origin. On the other hand, we showed in Theorem \ref{thm:normality2} that robustness adds a bias to the SAA solution in the direction $\pi$ and changes its variance from ${\xi(0)}/{n}$ to ${\xi(\delta)}/{n}$. It is natural to ask whether this corresponds to shrinkage towards the origin and a reduction in the variance of the solution (perhaps when the reward function is concave).

The following example shows that this is not the case, and that robustness can simultaneously bias the SAA solution away from the origin and increase its variance, even when the reward function is concave. More generally, \eqref{eq:robust-mv-general} shows that DRO controls the variance (sensitivity) of the reward, but is only equivalent to regularizing the solution in certain special cases.

Let
\begin{eqnarray*}
f(x,\,Y) = Y h(x)-x.
\end{eqnarray*}
where the random variable $Y$ is positive with distribution $\mathbb P$. We assume $h(x) = {x^{-a}}/{(-a)}$ ($a>0$), though we will only substitute this expression for $h(x)$ later in the example.

Given samples $Y_1,\cdots,\,Y_n\sim {\mathbb P}$ of $Y$, the solution $x_n(0)$ of the empirical problem \eqref{emp_n} is the solution of the equation
\begin{eqnarray*}
\hat{Y}_nh'(x)-1 = 0,
\end{eqnarray*}
where the random variable $\hat{Y}_n=(Y_1+\cdots+Y_n)/n$.

For concreteness, suppose that the deviation measure in the robust problem \eqref{phi-div} is the modified $\chi^2$ (see, e.g., \cite{GKL}). Writing $x_n(\delta) = x_n(0)+ \pi_n\delta + o(\delta)$ for some $\pi_n$, the first-order conditions for \eqref{eq:robust-mv-general} imply
\begin{eqnarray*}
x_n(\delta)=x_n(0) + \delta \Big(\frac{\hat{\sigma}_n^Y}{\hat{Y}_n}\Big)^2 \frac{h(x_n(0))}{h''(x_n(0))} + o(\delta),
\end{eqnarray*}
where $\hat{\sigma}_n^Y$ is the sample standard deviation of $Y$ and $\hat{\sigma}_n^Y/\hat{Y}_n$ is an estimate of its coefficient of variation.  When $h(x) = {x^{-a}}/{(-a)}$, we have
$h'(x)=x^{-a-1}$ and $h''(x) = -(a+1)x^{-a-2}$, so
\begin{eqnarray*}
x_n(\delta) =  x_n(0) + \frac{\delta}{a(a+1)}x_n(0)^2 \Big(\frac{\hat{\sigma}_n^Y}{\hat{Y}_n}\Big)^2 + o(\delta).
\end{eqnarray*}
Observe that $x_n(0)=\hat{Y}^{\frac{1}{a+1}}$.
In contrast to Lasso or Ridge, which shrinks the sample average solution towards the origin, robustness introduces a positive bias which pushes it away.

The variance of the robust solution is given by
\begin{eqnarray*}
{\mathbb V}_{\mathbb P}[x_n(\delta)] = {\mathbb V}_{\mathbb P}[x_n(0)] + \frac{2\delta}{a(a+1)} \mbox{Cov}_{\mathbb P}\Big[x_n(0),\,x_n(0)^2\Big(\frac{\hat{\sigma}_n^Y}{\hat{Y}_n}\Big)^2\Big]+ o(\delta).
\end{eqnarray*}
Therefore, if the covariance of $x_n(0)$ and $x_n(0)^2\big(\frac{\hat{\sigma}_n^Y}{\hat{Y}_n}\big)^2$ is positive, i.e., $\mbox{Cov}_{\mathbb P}[x_n(0),\,x_n(0)^2\big(\frac{\hat{\sigma}_n^Y}{\hat{Y}_n}\big)^2]>0$, robustness increases the variance of the solution.

The upper plot in Figure \ref{fig:variance} shows the distribution of $x_n(0)$ and $x_n(\delta)$ when $Y$ is exponentially distributed with mean $1$, $a=0.05$, and there are $n=10$ data points. The lower plot shows the objective function ${\mathbb E}_{\mathbb P}[f(x,\,Y)]$.  When $\delta=0.04$, the variance of the robust solution $x_n(\delta)$ is $0.8$ which is larger than the variance of the solution of the empirical problem, which is $0.1$.

\begin{figure}[h]
\begin{center}
\includegraphics[height=3in]{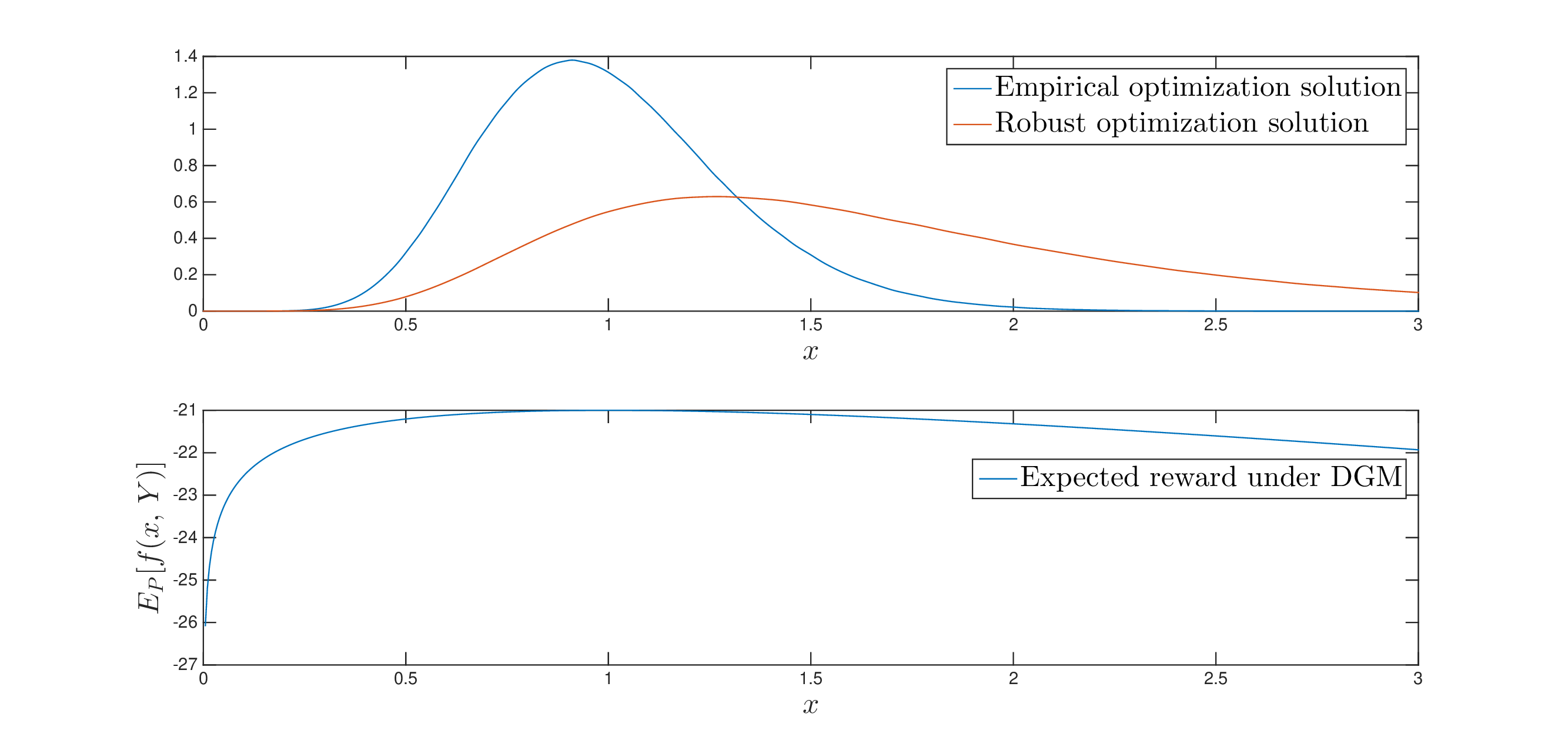}
\caption{The first plot shows the distribution of the solutions of empirical optimization and robust optimization when there are $n=10$ data points, and the second plot shows the objective function under the data generating mechanism. The optimal solution is $x^{\star}=1$. In this example, robustness adds a bias of $0.75$ to the empirical solution and pushes it away from the origin. Robustness also increases the variance of the solution from $0.1$ to $0.8$. }
\label{fig:variance}
\end{center}
\end{figure}


\section{Out of sample performance: Empirical optimization}
\label{sec:out-of-sample-emp}

\subsection{Preliminaries}

Let ${\mathcal D} = \{Y_1,\cdots,\,Y_n\}$ be data generated i.i.d. under  $\mathbb P$,  $x_n(0)$ be the solution of the empirical optimization problem \eqref{emp_n} constructed using $\mathcal D$, and $Y_{n+1}$ be an additional sample generated under $\mathbb P$ independent of the original sample.
Since $x_{n}(0)$ depends only the data ${\mathcal D}$ and is therefore independent of $Y_{n+1}$ under the population distribution $\mathbb{P}$, it follows that $x_n(0)-x^\star(0)$ is also independent of $Y_{n+1}$ under $\mathbb P$.
Proposition \ref{prop:normality_empirical} implies
\begin{equation}
x_n(0)= x^{\star}(0) + \sqrt{\frac{\xi(0)}{n}}\big(Z + o_P(1)\big).
\label{eq:AN_for_emp_sol}
\end{equation}
In particular, $x_n(0)$ deviates from $x^{\star}(0)$ because of data variability which is captured by the term involving $Z$ and terms of $o_P(\frac{1}{\sqrt{n}})$, which for our purposes turn out to be negligible.
We study out-of-sample performance of $x_n(0)$ by evaluating the mean and variance of the reward $f(x_n(0),\,Y_{n+1})$ over random samples of the data ${\mathcal D}$ and $Y_{n+1}$ generated under $\mathbb P$.

Our subsequent results characterize the impact of robustness through a Taylor series expansion of the  mean and variance of the out-of-sample reward when the robustness parameter is small. The following result is a general statement of this expansion, which will streamline some of these derivations. The smoothness and integrability conditions are the same as those stated in Assumption \ref{ass1} and guarantee that Taylor series expansions are defined and expected values and variances exist. The proof of this result can be found in Appendix \ref{sec:TS}.

\begin{proposition}\label{prop:expansion}
Suppose that $f(x,\,Y)$ is twice continuously differentiable in $x\in\mathbb{R}^d$ for ${\mathbb P}$-almost surely every $Y\in\mathbb{R}^{l}$; that for every fixed $x\in\mathbb{R}^{d}$, the mappings $y\mapsto f(x,y), \nabla_{x}f(x,y), \nabla^{2}_{x}f(x,y)$, $y\in\mathbb{R}^{l}$, are measurable; and that all moments of the random variables $f(x,Y)$, $\nabla_{x}f(x,Y)$, $\nabla^{2}_{x}f(x,Y)$ exist where $Y$ has distribution $\mathbb P$.
Let $Y_1,\ldots,\,Y_n,\,Y_{n+1}$ be independently generated from the distribution $\mathbb P$. Let $x\in\mathbb{R}^{d}$ be constant and $\Delta\in\mathbb{R}^{d}$ a random vector independent of $Y_{n+1}$ under $\mathbb{P}$. Then for any constant $\delta$,
\begin{eqnarray}
\lefteqn{\mathbb{E}_{\mathbb P}\big[f(x+\delta \Delta,\,Y_{n+1})\big] =  \mathbb{E}_{\mathbb P}[f(x,\,Y_{n+1})] + \delta \,\mathbb{E}_{\mathbb P}[\Delta]'\,\mathbb{E}_{\mathbb P}\left[\nabla_x f(x,\,Y_{n+1})\right]} \label{eq:mean_f}
\\  [5pt]
& &     + \frac{\delta^2}{2}\mathrm{tr}\Big(\mathbb{E}_{\mathbb P}[\Delta \Delta']\,\mathbb{E}_{\mathbb P}\big[\nabla_x^2 f(x,\,Y_{n+1})\big]\Big) + o(\delta^2),
\nonumber
\\  [5pt]
\lefteqn{{\mathbb V}_{\mathbb P}\big[f(x+\delta\Delta,\,Y_{n+1})\big] = \mathbb{V}_{\mathbb{P}}\big[f(x,\,Y_{n+1})\big] +  \delta \mathbb{E}_{\mathbb P}[\Delta]'\,\nabla_x {\mathbb V}_{\mathbb P}\big[f(x,\,Y_{n+1})\big] } \label{eq:var_f}  \\[5pt]
& &  + \frac{\delta^2}{2}\, \mathrm{tr}\Big( \mathbb{E}_{\mathbb P}[\Delta\Delta']\,\nabla_x^2 {\mathbb V}_{\mathbb P}[f(x,\,Y_{n+1})]+ 2 {\mathbb V}_{\mathbb P}[\Delta]\,{\mathbb E}_{\mathbb P}[\nabla_x f(x,\,Y_{n+1})]\,{\mathbb E}_{\mathbb P}[\nabla_x f(x,\,Y_{n+1})]'\Big) + o(\delta^2), \nonumber
\end{eqnarray}
where the first and second derivatives of the variance of the reward with respect to decision $x$ satisfy
\begin{align}
\nabla_x  {\mathbb V}_{\mathbb P}\big[f(x,\,Y_{n+1})\big] & = 2\,\mathrm{Cov}_{\mathbb{P}}\big[f(x,\,Y_{n+1}),\,\nabla_x f(x,\,Y_{n+1})\big],
\label{eq:var_derivatives}
\\
\nabla_x^2 {\mathbb V}_{\mathbb P}\big[f(x,\,Y_{n+1})\big] & = 2\,{\mathbb V}_{\mathbb P}\big[\nabla_x f(x,\,Y_{n+1})\big] + 2\,\mathrm{Cov}_{\mathbb{P}}\big[f(x,\,Y_{n+1}),\,\nabla_x^2 f(x,\,Y_{n+1})\big].
\label{eq:var_derivatives2}
\end{align}
\end{proposition}

\subsection{Empirical optimization}
We now derive an expansion of the out-of-sample expected reward and variance under the empirical optimal solution, which will serve as a baseline case when studying the out-of-sample performance of robust optimization.

To ease notation, we denote the  mean and variance of the out-of-sample reward under $x_n(\delta)$ and the population solution $x^{\star}(\delta)$ as
\begin{align}
\label{eq:mv notation}
\mu_n(\delta) := {\mathbb E}_{\mathbb P}\big[f(x_n(\delta),\,Y_{n+1})\big], \qquad & v_n(\delta) := {\mathbb V}_{\mathbb P}\big[f(x_n(\delta),\,Y_{n+1})\big], \\
\mu^{\star}(\delta) := {\mathbb E}_{\mathbb P}\big[f(x^{\star}(\delta),\,Y_{n+1})\big], \qquad & v^{\star}(\delta) := {\mathbb V}_{\mathbb P}\big [f(x^{\star}(\delta),\,Y_{n+1})\big],
\nonumber
\end{align}
($\delta=0$ corresponds to empirical optimization). The curvature of the expected reward and the second derivative of the variance of the reward at $x^\star(\delta)$ also appear. We denote these as
\begin{align}
H(\delta) & := {\mathbb{E}}_{\mathbb P}\big[\nabla_x^2 f(x^{\star}(\delta),\,Y_{n+1}) \big], \nonumber \\
G(\delta) & := \nabla_x^2 {\mathbb V}_{\mathbb P}\big[f(x^{\star}(\delta),\,Y_{n+1})\big],
\label{eq:GH}
\end{align}
where $\nabla_x^2 {\mathbb V}_{\mathbb P}\big[f(x^{\star}(\delta),\,Y_{n+1})\big]$ is defined in \eqref{eq:var_derivatives2}.

The mean and variance of the out-of-sample reward depend on the variability of the empirical solution $x_n(0)$ and the properties of the reward function. The following result relates the mean and variance of the out-of-sample reward for the empirical optimizer to those of the population optimizer $x^{\star}(0)$.
\begin{proposition}
\label{prop:expansion_nominal}
Suppose that $f(x,\,Y)$ satisfies Assumption \ref{ass1}. Then the mean and variance of the out-of-sample reward under the empirical solution $x_n(0)$ satisfy
\begin{align}
\mu_n(0) &= \mu^{\star}(0) + \frac{1}{2n}\mathrm{tr}\Big(\xi(0)H(0)\Big) + o\Big(\frac{1}{n}\Big),
\label{eq:asymp_mean_nominal}
\\
v_n(0) &= v^{\star}(0) + \frac{1}{2n}\mathrm{tr}\Big({\xi(0)}\,G(0)\Big)   + o\Big(\frac{1}{n}\Big),
\label{eq:asymp_var_nominal}
\end{align}
where $\mu^{\star}(0)$ and $v^{\star}(0)$ are the out-of-sample mean and variance of the population optimizer $x^{\star}(0)$.
\end{proposition}

\begin{proof}
It is clear from \eqref{eq:mv notation} that the mean and variance of the out-of-sample reward depend on the distribution properties of the solution $x_n(0)$, the reward function, and the distribution of $Y_{n+1}$. For the solution,  Proposition \ref{prop:normality_empirical} shows that
\eqref{eq:AN_for_emp_sol} holds under Assumption \ref{ass1}, and hence
\begin{eqnarray*}
v_n(0) = {\mathbb V}_{\mathbb P}\big[f(x_n(0),\,Y_{n+1})\big]
 = {\mathbb V}_{\mathbb P}\Big[f\Big(x^{\star}(0)+\sqrt{\frac{\xi(0)}{n}}\big(Z + o_P(1)\big),\,Y_{n+1}\Big)\Big].
\end{eqnarray*}
This can be evaluated using Proposition \ref{prop:expansion} (with $\Delta = \sqrt{\xi(0)} Z$ and $\delta=\frac{1}{\sqrt{n}}$) (see Appendix \ref{sec:expansion_nominal} for more details), from which the expression \eqref{eq:asymp_var_nominal} follows. The expression \eqref{eq:asymp_mean_nominal} for the expected out-of-sample reward under the empirical optimal can be derived in the same way.
\end{proof}

\begin{figure}[h]
\centering
\includegraphics[height=3in]{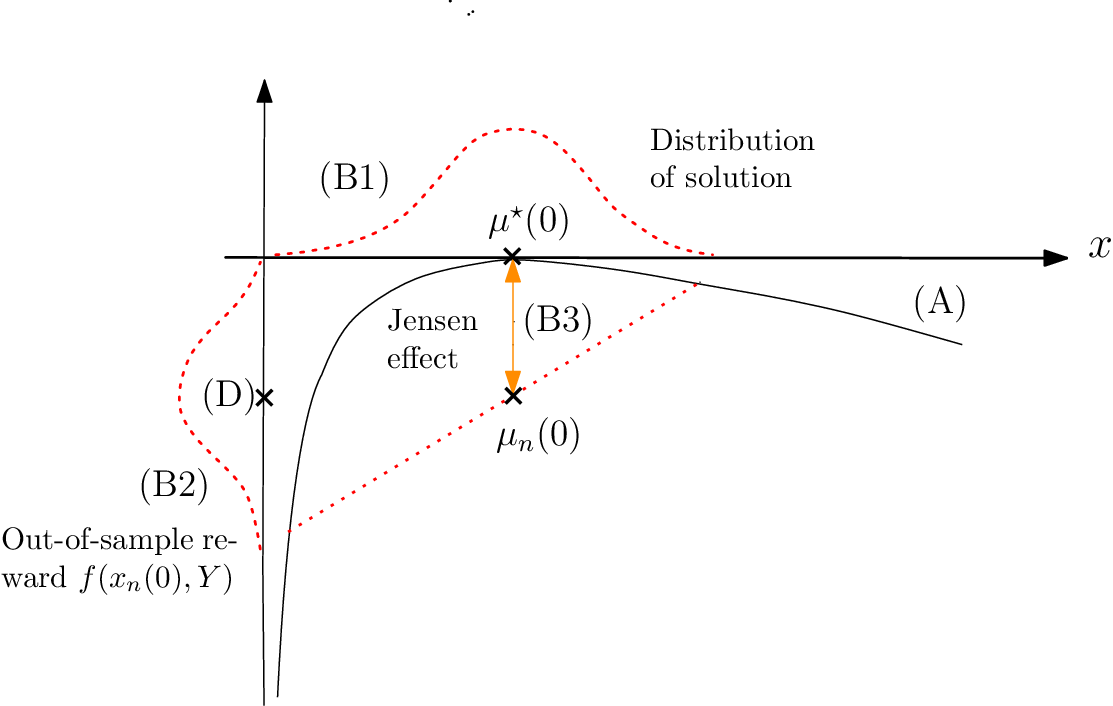}
\vspace{0.5cm}
\caption{The plot (A) is the expected reward $F(x)$ under the population distribution. (B1) is the distribution of $x_n(0)$;  (B2) is the distribution of the out-of-sample reward $f(x_n(0),\,Y)$ under  $x_n(0)$; (B3) shows the gap $\mu^\star(0)-\mu_n(0)$ from Jensen's inequality given by \eqref{eq:gap1}. The expected reward $\mu_n(0)$ is shown at (D), while $\mu^\star(0)$  is indicated on (A).}
\label{fig:Jensen1}
\end{figure}

Proposition \ref{prop:expansion_nominal} quantifies the difference between the out-of-sample means and out-of-sample variances for the solution $x^\star(0)$ of the population problem  and the empirical solution $x_n(0)$. In particular, \eqref{eq:asymp_mean_nominal} shows that the difference between the expected rewards is increasing in the variance $\frac{\xi(0)}{n}$ of the solution $x_n(0)$ and the curvature $H(0)$ of the expected reward around $x^{\star}(0)$:
\begin{eqnarray}
\mu^{\star}(0) - \mu_n(0) = - \frac{1}{2n}\mathrm{tr}\Big(\xi(0) H(0)\Big) + o\Big(\frac{1}{n}\Big) > 0.
\label{eq:gap1}
\end{eqnarray}
This performance gap can be attributed to Jensen's inequality. Specifically, if
\begin{eqnarray*}
F(x) = {\mathbb E}_{\mathbb P}\big\{f(x,\,Y_{n+1})\big|\, x\big\}
\end{eqnarray*}
denotes the out-of-sample expected reward, then $F(x)$ is strictly concave in $x$ (Assumption \ref{ass1}) and
\begin{eqnarray*}
F({\mathbb E}_{\mathbb P}[x_n(0)]) - {\mathbb E}_{\mathbb P}\big[F(x_n(0))\big] =  {\mathbb E}_{\mathbb P} \big[f({\mathbb E}_{\mathbb P}[x_n(0)],\,Y_{n+1})\big] - {\mathbb E}_{\mathbb P}\big[{\mathbb E}_{\mathbb P}\big\{f(x_n(0),\,Y_{n+1})\big|\, x_n(0)\big\}\big] >0
\end{eqnarray*}
by Jensen's inequality. Since ${\mathbb E}_{\mathbb P}[x_n(0)]=x^{\star}(0) + o(1/\sqrt{n})$
\begin{eqnarray*}
\lefteqn{F({\mathbb E}_{\mathbb P}[x_n(0)]) - {\mathbb E}_{\mathbb P}\big[F(x_n(0))\big]}\\ & = & {\mathbb E}_{\mathbb P} \big[f({\mathbb E}_{\mathbb P}[x_n(0)],\,Y_{n+1})\big] - {\mathbb E}_{\mathbb P}\big[{\mathbb E}_{\mathbb P}\big\{f(x_n(0),\,Y_{n+1})\big|\, x_n(0)\big\}\big] +  o\Big(\frac{1}{n}\Big) \\
& = &   - \frac{1}{2n}\mathrm{tr}\Big({\xi(0)}\,{\mathbb{E}}_{\mathbb P}\Big[\nabla_x^2 f(x^{\star}(0),\,Y_{n+1}) \Big]\Big)  +  o\Big(\frac{1}{n}\Big),
\end{eqnarray*}
which is precisely the performance gap in \eqref{eq:asymp_mean_nominal}  (or equivalently, \eqref{eq:gap1}).

An illustration of Proposition \ref{prop:expansion_nominal} is shown in Figure \ref{fig:Jensen1} where (A) is the expected reward $F(x)$ under the population distribution on which $\times$ is the optimal value $\mu^\star(0)$; (B1) is the distribution of $x_n(0)$ and (B2) is the distribution of the out-of-sample reward $f(x_n(0),\,Y)$. The expected out-of-sample reward $\mu_n(0)$ is marked $\times$  at (D) and  at the point labelled $\mu_n(0)$. We can see that when $F(x)$ is strictly concave, variability in $x_n(0)$ around its asymptotic mean $x^\star(0)$ leads to a gap between  $\mu^\star(0)$ and  $\mu_n(0)$ from Jensen's inequality that is shown at (B3). This gap is quantified in Proposition \ref{prop:expansion_nominal}.

\section{Out-of-sample performance: Robust optimization}
\label{sec:out-of-sample-robust}

We now study the mean and variance of the out-of-sample reward generated by solutions of robust optimization problems $x_n(\delta)$ and characterize the impact of the robustness parameter $\delta$. Our analysis consists of two parts. To begin, we derive expressions analogous to \eqref{eq:asymp_mean_nominal} and \eqref{eq:asymp_var_nominal} in which the out-of-sample mean (variance) of the reward $f(x_n(\delta),\,Y_{n+1})$ is decomposed into the mean (variance) of the reward associated with the solution $x^\star(\delta)$ of the population problem and an additional component that is determined by the variability of the solution around the asymptotic mean and the curvature of the reward function; see \eqref{eq:mean1} and \eqref{temp1}.  Next, we evaluate the impact of the robustness parameter on the mean and variance of the out-of-sample reward.

\subsection{Expected reward} \label{sec:out of sample expected reward}
We analyze the impact of robustness on the mean of the out-of-sample reward by expanding $\mu_n(\delta) = {\mathbb E}_{\mathbb P}\big[f(x_n(\delta),\,Y_{n+1})\big]$ around $\delta=0$.

To begin, we evaluate the impact of solution variability and the curvature of the expected reward on the out-of-sample expected reward under the robust solution.
Suppose that $\phi(z)$ satisfies Assumption \ref{ass2} and $f(x,\,Y)$ satisfies Assumption \ref{ass1}.
Recall from Theorem \ref{thm:normality2} that the robust solution is asymptotically normal
\begin{eqnarray}
x_n(\delta) = x^{\star}(\delta) + \sqrt{\frac{\xi(\delta)}{n}} \big(Z + o_P (1)\big).
\label{eq:rob_soln_var}
\end{eqnarray}
It follows that the out-of-sample expected reward
\begin{align}
\label{eq:mean1}
\overbrace{{\mathbb E}_{\mathbb P}\Big[f(x_n(\delta),\,Y_{n+1})\Big]}^{\small \mbox{$\mu_n(\delta)$}}
& = {\mathbb E}_{\mathbb P}\Big[f\Big(x^{\star}(\delta) + \frac{\sqrt{\xi(\delta)}}{\sqrt{n}}\big(Z+o_P(1)\big),\,Y_{n+1}\Big)\Big]  \\[5pt]
& = \underbrace{{\mathbb E}_{\mathbb P}\Big[f(x^{\star}(\delta),\,Y_{n+1})\Big]}_{\small\mbox{$\mu^{\star}(\delta)$}} + \underbrace{\frac{1}{2n}\mathrm{tr}\Big(\xi(\delta)\,{\mathbb E}_{\mathbb P}\Big[\nabla_x^2 f(x^{\star}(\delta),\,Y_{n+1})\Big]\Big)}_{\small\mbox{$\frac{1}{2n}\mathrm{tr}\big(\xi(\delta) H(\delta)\big) \equiv$ Jensen effect}} + o\Big(\frac{1}{n}\Big). \nonumber
\end{align}
Noting \eqref{eq:AN_for_emp_sol}, robustness changes the mean of the solution from $x^\star(0)$ to $x^\star(\delta)$ and its asymptotic variance from ${\xi(0)}/{n}$ to ${\xi(\delta)}/{n}$. The expression \eqref{eq:mean1} is analogous to \eqref{eq:asymp_mean_nominal} but reflects the new mean and variance of the solution.  Specifically, the first term is the expected reward under asymptotic mean $x^{\star}(\delta)$ while the second term is the Jensen effect, namely, the reduction in the expected reward
${\mathbb E}_{\mathbb P}\big[f(x^{\star}(\delta),\,Y_{n+1})\big] - {\mathbb E}_{\mathbb P}\big[f(x_n(\delta),\,Y_{n+1})\big]$
due to fluctuations of the  robust solution $x_n(\delta)$ around the asymptotic mean $x^{\star}(\delta)$ and the curvature of the expected reward.

To understand the impact of robustness on the out-of-sample expected reward, we expand both terms in \eqref{eq:mean1} around $\delta=0$ (i.e., the SAA solution). By \eqref{eq:rob_asymp_bias} and Proposition \ref{prop:expansion} (with $\Delta=\pi$) we have
\begin{align}
\label{eq:exp1}
{\mathbb E}_{\mathbb P}\Big[f(x^{\star}(\delta),\,Y_{n+1})\Big]
 & =  {\mathbb E}_{\mathbb P}\Big[f(x^{\star}(0),\,Y_{n+1})\Big] + \frac{\delta^2}{2} \pi'{\mathbb E}_{\mathbb P} \Big[\nabla_x^2 f(x^{\star}(0),\,Y_{n+1})\Big] \pi + o(\delta^2)  \\
 & =\mu^{\star}(0) + \frac{\delta^2}{2}\pi'H(0)\pi + o(\delta^2). \nonumber
\end{align}
Robustness changes the asymptotic mean of the solution $x_n(\delta)$  from $x^{\star}(0)$ to $x^{\star}(\delta)=x^{\star}(0)+\delta\pi +o(\delta)$ which changes the expected reward. However, \eqref{eq:exp1} shows that this change  is of order $\delta^2$ which is small when $\delta$ is small.

In the case of the Jensen effect we have
\begin{align}
\label{eq:jensen-robust}
\lefteqn{\frac{1}{2n}\mathrm{tr}\Big(\xi(\delta)H(\delta)\Big)}\\
& = \underbrace{\frac{1}{2n}\mathrm{tr}\Big(\xi(0)H(0)\Big)}_{\small\mbox{Jensen effect for $x_n(0)$}} + \underbrace{\frac{1}{2n}\mathrm{tr}\Big(\big(\xi(\delta)-\xi(0)\big)H(0)\Big)}_{\small \mbox{Change in variance of solution}} + \underbrace{\frac{1}{2n}\mathrm{tr}\Big(\xi(0)\big(H(\delta)-H(0)\big)\Big)}_{\small\mbox{Change in curvature}} \nonumber \\[5pt]
& + \underbrace{\frac{1}{2n}\mathrm{tr}\Big(\big(\xi(\delta)-\xi(0)\big) \big(H(\delta)-H(0)\big)\Big)}_{\small\mbox{negligible}}.\nonumber
\end{align}
The first term is the Jensen effect associated with the empirical solution \eqref{eq:asymp_mean_nominal}, while the remaining terms are adjustments to the Jensen effect for $x_n(0)$ due to the change in the variance of the solution from $\xi(0)/n$ to $\xi(\delta)/n$ and the change in the curvature due to the shift in the mean of the solution from $x^{\star}(0)$ to $x^{\star}(\delta)$. If $f(x,\,Y)$ is such that for any matrix $Q\in\mathbb{R}^{d\times d}$, the function $L:{\mathbb R^d}\rightarrow {\mathbb R}$ where
\begin{eqnarray}
\label{eq:L-def}
L(x) := \mathrm{tr}\Big(Q\,{\mathbb E}_{{\mathbb P}}\big[\nabla_x^2 f(x,\,Y_{n+1})\big]\Big)
\end{eqnarray}
is continuously differentiable, it follows from the definition \eqref{eq:GH} of $H(\delta)$ and the expansion \eqref{eq:rob_asymp_bias} of $x^{\star}(\delta)$ that
\begin{eqnarray}
\label{eq:jensen1}
\frac{1}{2n}\mathrm{tr}\Big(\xi(\delta)H(\delta)\Big) = \frac{1}{2n}\mathrm{tr}\Big(\xi(0)H(0)\Big) + \frac{\rho\delta}{2n}+ o(\delta),
\end{eqnarray}
where
\begin{eqnarray}
\rho := \mathrm{tr}\Big(\xi'(0)H(0)\Big) + \pi'\nabla_x \Big[\mathrm{tr}\Big(\xi(0)\,{\mathbb E}_{{\mathbb P}}\big[\nabla_x^2 f(x^{\star}(0),\,Y_{n+1})\big]\Big)\Big],
\label{eq:rho}
\end{eqnarray}
where $\xi'(0)$  is the derivative of $\xi(\delta)$ with respect to $\delta$ evaluated at $\delta=0$ (Theorem \ref{thm:normality2}). Compared to \eqref{eq:exp1}, where the impact of robustness is $O(\delta^2)$, the impact of robustness on the Jensen effect \eqref{eq:jensen1} is $O(\delta)$, coming from the adjustment $\frac{\rho\delta}{2n}$. This term is scaled by a factor $1/n$, so although it is linear in $\delta$, the net effect is small when $\delta$ is small. The expansion of the out-of-sample expected reward is obtained by combining \eqref{eq:mean1}, \eqref{eq:exp1}, and \eqref{eq:jensen1}.

\subsection{Variance of the reward} \label{sec:out of sample variance reward}
We analyze the impact of robustness on the variance of the out-of-sample reward by expanding $v_n(\delta) = {\mathbb V}_{\mathbb P}\big[f(x_n(\delta),\,Y_{n+1})\big]$ around $\delta=0$.

Under Assumptions \ref{ass2} and \ref{ass1}, it is shown in Theorem \ref{thm:normality2} that robustness changes the mean of the solution from $x^\star(0)$ to $x^\star(\delta)$ and its asymptotic variance from ${\xi(0)}/{n}$ to ${\xi(\delta)}/{n}$ (see also \eqref{eq:AN_for_emp_sol} and \eqref{eq:rob_soln_var}).
To see the impact on the variance of the out-of-sample reward, observe from  \eqref{eq:var_f} that
\begin{align}
\lefteqn{{\mathbb V}_{\mathbb P}\big[f(x_n(\delta),\,Y_{n+1})\big]}
\nonumber \\[5pt]
& = {\mathbb V}_{\mathbb P}\Big[f\big(x^{\star}(\delta) + \frac{\sqrt{\xi(\delta)}}{\sqrt{n}}\big(Z+o_P(1)\big),\,Y_{n+1}\big)\Big] \nonumber \\[5pt]
& = {\mathbb V}_{\mathbb P}\big[f(x^{\star}(\delta),\,Y_{n+1})\big]
\label{temp1} \\[5pt]
& + \frac{1}{2n} \mathrm{tr}\Big(\xi(\delta)\left\{\nabla_x^2 {\mathbb V}_{\mathbb P}\big[f(x^{\star}(\delta),\,Y_{n+1})\big]+ 2 {\mathbb E}_{\mathbb P}\big[\nabla_x f(x^{\star}(\delta),\,Y_{n+1})\big]{\mathbb E}_{\mathbb P}\big[\nabla_x f(x^{\star}(\delta),\,Y_{n+1})\big]'\right\}\Big) \nonumber \\[5pt]
& + o\Big(\frac{1}{n}\Big), \nonumber
\end{align}
where $\nabla_x^2 {\mathbb V}_{\mathbb P}\big[f(x^{\star}(\delta),\,Y_{n+1})\big]$ is given in \eqref{eq:var_derivatives2}. The first term in \eqref{temp1} is the variance of the reward $v^{\star}(\delta)$ at the optimal solution $x^{\star}(\delta)$ of the population robust model. The second term is the impact of the variability of the robust solution $x_n(\delta)$ on the variance of the reward. The expansion \eqref{temp1} is analogous to the expansion \eqref{eq:asymp_var_nominal} of the variance for the empirical problem, the difference being that it is for the robust solution $x_n(\delta)$ instead of $x_n(0)$.

To see the impact of robustness on the variance of the reward, we expand \eqref{temp1} around $\delta=0$.
Noting \eqref{eq:rob_asymp_bias}, it follows from Proposition \ref{prop:expansion} that the first term
\begin{eqnarray}
{\mathbb V}_{\mathbb P}\big[f(x^{\star}(\delta),\,Y_{n+1})\big] =
{\mathbb V}_{\mathbb P}\big[f(x^{\star}(0),\,Y_{n+1})\big] +  \delta \pi' \Big(\nabla_x {\mathbb V}_{\mathbb P}\big[f(x^{\star}(0),\,Y_{n+1})\big] \Big),
\label{temp2}
\end{eqnarray}
where $\nabla_x {\mathbb V}_{\mathbb P}\big[f(x^{\star}(0),\,Y_{n+1})\big]$ is defined in \eqref{eq:var_derivatives}.

As for the second term in \eqref{temp1} observe that
\begin{align*}
\lefteqn{\frac{1}{2n} \mathrm{tr}\Big(\xi(\delta)\,\nabla_x^2 {\mathbb V}_{\mathbb P}\big[f(x^{\star}(\delta),\,Y_{n+1})\big]\Big)} \\[5pt]
& = \frac{1}{2n} \mathrm{tr}\Big(\xi(0)G(0)\Big) + \frac{1}{2n}\mathrm{tr}\Big(\big(\xi(\delta)-\xi(0)\big)G(0)\Big) + \frac{1}{2n}\mathrm{tr}\Big(\xi(0)\big(G(\delta)-G(0)\big)\Big) \\[5pt]
& + \frac{1}{2n}\mathrm{tr}\Big(\big(\xi(\delta)-\xi(0)\big)\big(G(\delta)-G(0)\big)\Big).
\end{align*}
If $f(x,\,Y)$ is such that for any matrix $Q\in\mathbb{R}^{d\times d}$, the function $M:{\mathbb R^d}\rightarrow {\mathbb R}$ where
\begin{eqnarray}
\label{eq:M-def}
M(x) := \mathrm{tr}\Big(Q\,\nabla_x^2 {\mathbb V}_{{\mathbb P}}\big[f(x,\,Y_{n+1})\big]\Big)
\end{eqnarray}
is continuously differentiable, the expansion \eqref{eq:rob_asymp_bias} of $x^{\star}(\delta)$ and the definition  of $G(\delta)$ in (\ref{eq:GH}) implies
\begin{align*}
\lefteqn{\frac{1}{2n} \mathrm{tr}\Big(\xi(\delta)\,\nabla_x^2 {\mathbb V}_{\mathbb P}\big[f(x^{\star}(\delta),\,Y_{n+1})\big]\Big)} \\[5pt]
& = \frac{1}{2n} \mathrm{tr}\Big(\xi(0)G(0)\Big)+\frac{\delta}{2n}\mathrm{tr}\Big(\xi'(0)G(0)\Big)
+ \frac{\delta}{2n}\pi'\nabla_x \Big[\mathrm{tr}\Big(\xi(0)\,\nabla_x^2 {\mathbb V}_{\mathbb P}\big[f(x^{\star}(0),\,Y_{n+1})\big]\Big)\Big] \\[5pt]
& + \frac{1}{2n}O(\delta^2).
\end{align*}
Since
\begin{eqnarray*}
{\mathbb E}_{\mathbb P}\big[\nabla_x f(x^{\star}(\delta),\,Y_{n+1})\big]{\mathbb E}_{\mathbb P}\big[\nabla_x f(x^{\star}(\delta),\,Y_{n+1})\big]' \sim O(\delta^2),
\end{eqnarray*}
it follows that
\begin{eqnarray}
\frac{1}{2n} \mathrm{tr}\Big(\xi(\delta)\left\{\nabla_x^2 {\mathbb V}_{\mathbb P}\big[f(x^{\star}(\delta),\,Y_{n+1})\big]+ 2 {\mathbb E}_{\mathbb P}\big[\nabla_x f(x^{\star}(\delta),\,Y_{n+1})\big]\,{\mathbb E}_{\mathbb P}\big[\nabla_x f(x^{\star}(\delta),\,Y_{n+1})\big]'\right\}\Big) \nonumber  \\[5pt]
=  \frac{1}{2n}\mathrm{tr}\Big(\xi(0)G(0)\Big) + \frac{\theta\delta}{2n} + \frac{1}{n}O(\delta^2),
\label{eq:Jensen}
\end{eqnarray}
where
\begin{eqnarray}
\theta :=  \mathrm{tr}\Big(\xi'(0)G(0)\Big)+ \pi'\nabla_x \Big[\mathrm{tr}\Big(\xi(0)\,\nabla_x^2 {\mathbb V}_{\mathbb P}\big[f(x^{\star}(0),\,Y_{n+1})\big]\Big)\Big].
\label{eq:theta}
\end{eqnarray}
The first term in \eqref{eq:Jensen} is from the empirical optimizer \eqref{eq:asymp_var_nominal} while the term $\frac{\theta\delta}{2n}$ reflects the impact of robustness on the out-of-sample variance. An expansion of the out-of-sample variance $v_n(\delta) = {\mathbb V}_{\mathbb P}\big[f(x_n(\delta),\,Y_{n+1})\big]$ around $\delta=0$ is obtained by combining \eqref{temp2} and \eqref{eq:Jensen}.

\subsection{Main Result} \label{sec:main_results}
Let
\begin{align}
\Big\{\big(\mu_{n}(\delta),\,v_{n}(\delta)\big):\delta\geq 0\Big\}
\label{def:rmvf}
\end{align}
be the robust mean-variance frontier \eqref{eq:mv notation}. The following result characterizes the frontier in the asymptotic regime when $n$ is large and $\delta$ is small. In addition to Assumption \ref{ass1}, following assumption is required.
\begin{assumption} \label{ass3}
The function $f(x,\,Y)$ is such that for every matrix $Q\in\mathbb{R}^{d\times d}$, the functions $L:{\mathbb R}^d\rightarrow{\mathbb R}$ and $M: {\mathbb R}^d\rightarrow {\mathbb R}$ defined in (\ref{eq:L-def}) and (\ref{eq:M-def}), respectively, are continuously differentiable in $x\in{\mathbb R}^d$.
\end{assumption}
\begin{proposition} \label{prop:expansion_robust}
Suppose $\phi(z)$ satisfies Assumption \ref{ass2} and $f(x,\,Y)$ satisfies Assumptions \ref{ass1} and \ref{ass3}.
Then the expected value  of the out-of-sample reward under the robust solution $x_n(\delta)$ is
\begin{align}
\label{eq:asymp_mean_robust}
\mu_n(\delta) & = \mu^{\star}(0) + \frac{1}{2n}\mathrm{tr}\Big(\xi(0)H(0)\Big) +   \frac{\delta^2}{2[\phi^{''}(1)]^2} \beta'H(0)^{-1}\beta+ \frac{\rho\delta}{2n} \\
& + \frac{1}{n} O(\delta^2) + o\Big(\frac{1}{n}\Big) + o(\delta^2), \nonumber
\end{align}
where $H(0)$ is the curvature of the out-of-sample expected reward at $x^{\star}(0)$ as defined in \eqref{eq:GH} and
\begin{eqnarray}
\beta  := \mathrm{Cov}_{\mathbb{P}}\big[ \nabla_xf(x^{\star}(0),\,Y_{n+1}),\,f(x^{\star}(0),\,Y_{n+1}) \big].
\label{eq:beta}
\end{eqnarray}
The variance of the out-of-sample reward is
\begin{align}
\label{eq:asymp_var_robust}
v_n(\delta) & = v^{\star}(0) + \frac{1}{2n} \mathrm{tr}\Big(\xi(0) G(0)\Big)+ \frac{2\delta}{\phi^{''}(1)}\beta'H(0)^{-1}\beta
+ \frac{\theta\delta}{2n}  \\[5pt]
& + \frac{1}{n}O(\delta^2) + o\Big(\frac{1}{n}\Big) + O(\delta^2). \nonumber
\end{align}
\end{proposition}

\begin{proof}
It follows from \eqref{eq:mean1}, \eqref{eq:exp1}, and \eqref{eq:jensen1} and the expression \eqref{eq:pi} for $\pi$ that the out-of-sample expected reward is given by \eqref{eq:asymp_mean_robust}. The out-of-sample variance is given by \eqref{temp1}, \eqref{temp2}, and \eqref{eq:Jensen}. 
\end{proof}

For some perspective on Proposition \ref{prop:expansion_robust}, note that the mean and variance of the reward under the solution $x^{\star}(0)$ of the robust population problem are given by
\begin{align}
\mu^{\star}(\delta) & = \mu^{\star}(0) +
\frac{\delta^2}{2[\phi^{''}(1)]^2} \beta'H(0)^{-1}\beta + o(\delta^2),  \nonumber  \\[5pt]
v^{\star}(\delta) & = v^{\star}(0) + \frac{2\delta}{\phi^{''}(1)}\beta'H(0)^{-1}\beta
\label{eq:popfrontier}
+ O(\delta^2),
\end{align}
(we also get these equations when we let $n\rightarrow\infty$ in (\ref{eq:asymp_mean_robust}) and (\ref{eq:asymp_var_robust})) from which it follows that
\begin{eqnarray*}
\frac{\mathrm{d}\mu^{\star}}{\mathrm{d}\delta}(0) & = & 0, \\ [5pt]
\frac{\mathrm{d}v^{\star}}{\mathrm{d}\delta}(0)   & = & \frac{2}{\phi^{''}(1)} \beta'H(0)^{-1}\beta'.
\end{eqnarray*}
Since $f(x,\,Y)$ is strictly concave in $x$, the curvature  $H(0)$ is negative definite  so variance decreases linearly around $\delta=0$. In fact, \eqref{eq:popfrontier} shows that both the mean and variance are decreasing in $\delta$ but variance reduction is an order of magnitude larger than the reduction in the mean when $\delta$ is small ($O(\delta)$ vs. $O(\delta^2)$).  As $\delta$ increases, however, the quadratic term becomes significant and the mean decreases at an increasing rate. This suggests that the choice of $\delta$ has a significant impact on the out-of-sample reward, and that calibration of $\delta$ should therefore account for both the mean and the variance.

When $n$ is finite, the robust mean-variance frontier $\big\{(\mu_n(\delta),v_n(\delta)):\delta\geq 0\big\}$ deviates from the population frontier \eqref{eq:popfrontier} because the solution $x_{n}(\delta)$ is a random variable.  Indeed, \eqref{eq:asymp_mean_robust} and \eqref{eq:asymp_var_robust} can be written
\begin{align*}
\mu_n(\delta) & = \mu^{\star}(\delta) + \overbrace{\frac{1}{2n} \mathrm{tr}\Big(\xi(0)\, H(0) \Big) + \frac{\rho\delta}{2n}}^{\small\mbox{Jensen effect}}
 + \overbrace{\frac{1}{n} O(\delta^2) + o\Big(\frac{1}{n}\Big) + o(\delta^2)}^{\small \mbox{higher order terms}},  \\[5pt]
 v_n(\delta ) & = v^{\star}(\delta) + \frac{1}{2n} \mathrm{tr}\Big(\xi(0)\, G(0) \Big) + \frac{\theta\delta}{2n} +\frac{1}{n}O(\delta^2) + o\Big(\frac{1}{n}\Big) + O(\delta^2),
\end{align*}
showing that difference in out-of-sample expected rewards $\mu_n(\delta)-\mu^{\star}(\delta)$ comes from the variability in $x_n(\delta)$ via the Jensen effect, with a similar interpretation for the variance. The first term in the Jensen effect is from the variability of $x_{n}(0)$ around $x^{\star}(0)$ (see \eqref{eq:asymp_mean_nominal}) while the second term is the modification to the Jensen effect due to the bias that is added to the solution by robustness, which was discussed in \eqref{eq:jensen1}--\eqref{eq:rho}.

Another interpretation of Proposition \ref{prop:expansion_robust} can be obtained by writing the robust mean-variance frontier $(\mu_n(\delta), \, v_n(\delta))$ in terms the mean and variance $(\mu_n(0),\,v_n(0))$ of the out-of-sample reward for the empirical optimal $x_{n}(0)$, which was derived in Proposition  \ref{prop:expansion_nominal}.
\begin{theorem} \label{thm:expansion_robust}
Suppose $\phi(z)$ satisfies Assumption \ref{ass2} and $f(x,\,Y)$ satisfies Assumptions \ref{ass1} and \ref{ass3}.
Let the constants $\rho$ and $\theta$ be defined by \eqref{eq:rho} and \eqref{eq:theta}, and $H(0)$ and $\beta$ by \eqref{eq:GH} and \eqref{eq:beta}. Then
  \begin{align}
\label{eq:mean_rob}
\mu_n(\delta)  & = \mu_n(0) + \frac{\rho\delta}{2n}  +
\frac{\delta^2}{2[\phi^{''}(1)]^2} \beta'H(0)^{-1}\beta + \frac{1}{n} O(\delta^2)  + o(\delta^2), \\
v_n(\delta) & = v_n(0)  + \delta\Big[\frac{2}{\phi^{''}(1)}\beta'H(0)^{-1}\beta   + \frac{\theta}{2n}\Big]+ \frac{1}{n}O(\delta^2)  + O(\delta^2).
 \label{eq:var_rob}
\end{align}
\end{theorem}

In this case
\begin{align*}
\frac{\mathrm{d}\mu_n}{\mathrm{d}\delta}(0) & = \frac{\rho}{2n},  \\[5pt]
\frac{\mathrm{d}v_n}{\mathrm{d}\delta}(0)  & =  \frac{2}{\phi^{''}(1)}\beta'H(0)^{-1}\beta   + \frac{\theta}{2n},
\end{align*}
so in contrast the population frontier, derivatives are non-zero at $\delta=0$.  In the case of the variance, $\beta' H(0)^{-1}\beta < 0$. Since the term $\frac{\theta}{2n}$ is scaled by a factor $1/n$, for $n$ sufficiently large, the derivative is negative and variance reduces linearly around $\delta=0$. The rate of change of the mean is also linear but, for $n$ sufficiently large, $\frac{\rho}{2n}$ is small and variance reduction will dominate. As $\delta$ increases, the quadratic term in \eqref{eq:mean_rob} eventually dominates at which point the mean decreases at an increasing rate since the coefficient is negative. Interestingly, the marginal change in the mean is equal to the adjustment to the Jensen effect from robustness (see \eqref{eq:jensen-robust}, \eqref{eq:jensen1} and \eqref{eq:rho}).

In summary, \eqref{eq:mean_rob} and \eqref{eq:var_rob} show that when $\delta$ is small and $n$ large, the reduction in the variance of the out-of-sample reward under $x_n(\delta)$ is significantly larger than the impact on the  mean. These results apply when $x$ is unconstrained. If $x$ is constrained, the mean-sensitivity relationship \eqref{eq:robust-mv-general} still holds, so worst-case optimization is still making a trade-off between expected reward and sensitivity, but the constraints will affect the efficiency that can be achieved.

\subsection{Expected reward of DRO can outperform SAA} \label{sec:DRO_SAA}
The term $\frac{\rho}{2n}\delta$ in \eqref{eq:mean_rob} is related to the Jensen effect. In particular, robustness changes the mean of the empirical solution from $x_n(0)$ to $x_n(\delta)$ and its variance from $\xi(0)$ to $\xi(\delta)$, and it was shown in  \eqref{eq:jensen1}--\eqref{eq:rho} that this changes the Jensen effect from
\begin{eqnarray*}
\frac{1}{2n}\mathrm{tr}\Big(\xi(0) H(0)\Big)
\end{eqnarray*}
for the empirical solution $x_n(0)$ to
\begin{eqnarray*}
\frac{1}{2n}\mathrm{tr}\Big(\xi(0) H(0)\Big) + \frac{\rho}{2n}\delta.
\end{eqnarray*}
Note that  $\rho$ can be positive or negative. It is positive if the bias $\delta\pi$ that is introduced by robustness reduces the loss from the Jensen effect. When this happens,  $\mu_n(\delta)$ is larger than $\mu_n(0)$ when $\delta$ is sufficiently small for the linear term in \eqref{eq:mean_rob} to dominate the quadratic. That is, although DRO is a worst-case problem, the robust solution can have a larger out-of-sample expected reward than the SAA solution.

While there is no guarantee that $\rho$ will be positive, improvement in the expected reward under robust decisions has been documented empirically in the literature \cite{BLSZZ2013,KL,LorcaSun} and it will be seen in some of our examples. Regardless of its sign, $\frac{\rho}{2n}\delta$ is small relative to variance reduction, and disappears when $n\rightarrow \infty$.

\subsection{Summary} \label{sec:main results summary}

\begin{figure}[h]
\centering
\includegraphics[height=3in]{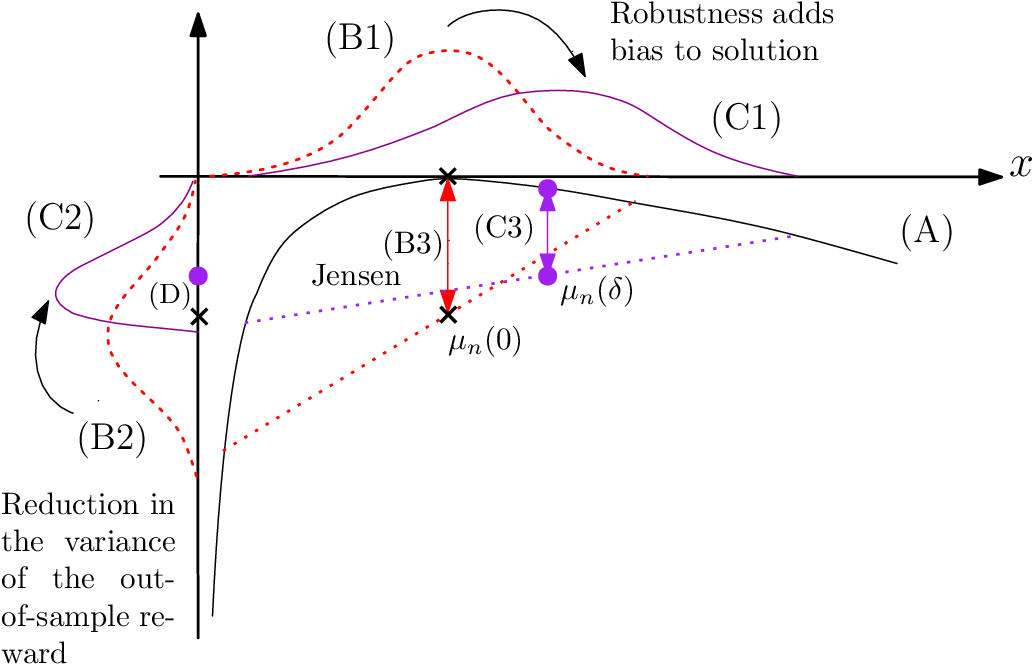}
\vspace{0.5cm}
\caption{(A) is the expected reward ${\mathbb E}_{\mathbb P}[f(x,\,Y)]$ under the population distribution. (B1), (B2), and (B3) are as described in Figure \ref{fig:Jensen1}, while (C1), (C2), and (C3) are counterparts for the robust problem. Specifically, (C1) is the distribution of the robust solution, $x_n(\delta)$; (C2) is the distribution of the out-of-sample reward $f(x_n(\delta),\,Y)$ under the robust solution; (C3) shows the Jensen effect $\mu^\star(\delta)-\mu_n(\delta)$. Indicated on (A) are the optimal rewards $\mu^\star(0)$ and $\mu^\star(\delta)$ of the population problems (see \eqref{eq:opt_dgm}, \eqref{eq:rob_dgm}, and \eqref{eq:mv notation}).}
\label{fig:Jensen2}
\end{figure}

Figure \ref{fig:Jensen2} illustrates the key theoretical insights from  this paper. Here, (A) is the expected reward ${\mathbb E}_{\mathbb P}[f(x,\,Y)]$ under the population distribution, (B1) is the distribution of the solution of the empirical problem $x_n(0)$, (B2) is the distribution of the out-of-sample reward $f(x_n(0),\, Y_{n+1})$, and (B3) is the ``Jensen effect" for the empirical problem; (B1), (B2), and (B3) were previously shown in Figure \ref{fig:Jensen1}. (C1), (C2), and (C3) are counterparts for the robust problem. Specifically, (C1) is the distribution of the robust solution, $x_n(\delta)$; (C2) is the distribution of the out-of-sample reward $f(x_n(\delta),\,Y)$ under the robust solution; (C3) shows the Jensen effect $\mu^\star(\delta)-\mu_n(\delta)$. Indicated on (A) are the optimal rewards $\mu^\star(0)$ and $\mu^\star(\delta)$ of the population problems (see \eqref{eq:opt_dgm}, \eqref{eq:rob_dgm} and \eqref{eq:mv notation}).

DRO makes a trade-off between expected reward and sensitivity. For the example in Figure \ref{fig:Jensen2}, this corresponds to adding a bias to the empirical solution that pushes $x_n(0)$ towards the flat part of the reward function. This changes the distribution of the solution from (B1) to (C1) and the distribution of the out-of-sample reward from (B2) and (C2). In particular, pushing the solution towards the flat part of the objective reduces the variance of the out-of-sample reward (compare the spread of (B2) and (C2)) and changes the mean reward from $\mu_n(0)$ to $\mu_n(\delta)$. Theorem \ref{thm:expansion_robust} quantifies the dependence of the mean and variance on $\delta$.

Robustness changes the Jensen effect from $\mu^\star(0)-\mu_n(0)$  for the SAA problem to $\mu^\star(\delta)-\mu_n(\delta)$ for the robust problem, indicated at (B3) and (C3). As indicated in the equation \eqref{eq:rho} for $\rho$, the change in the magnitude of the Jensen effect is determined by the change in the variability of the solution and the change in the curvature of the reward, which can also be see in this plot. For this example, the Jensen effect for the robust problem is smaller (i.e., $\rho>0$) and robustness improves the out-of-sample expected reward ($\mu_n(\delta)>\mu_n(0)$)  when $\delta$ is sufficiently small, which was discussed in Section \ref{sec:DRO_SAA} . This is shown at (D).

Our analysis shows that relative to the SAA solution, substantial sensitivity reduction is possible with minimal impact on the expected reward when the robustness parameter (uncertainty set) is small. These results say nothing about this trade-off when $\delta$ is large, but the benefits of robustness in all of our experiments (Section \ref{sec:applications}) are diminishing in $\delta$. This suggests that the ``small $\delta$" regime is the one that is most interesting.

\section{Calibration of Robust Optimization Models}
\label{sec:calibration}

Proposition \ref{prop:expansion_robust} shows that when the ambiguity parameter $\delta$  is small,  variance reduction is the first-order benefit of robust optimization while the impact on the mean is (almost) an order of magnitude smaller. This implies that $\delta$ should be calibrated by trading off between the out-of-sample mean and variance and not just by optimizing the mean alone, as is commonly done when tuning the free parameters (e.g., in Machine Learning applications).

Observe, however, that the robust mean-variance frontier
\begin{eqnarray*}
\Big\{\big(\mu_{n}(\delta),\,v_{n}(\delta)\big):\delta\geq 0\Big\} \equiv \Big\{\big({\mathbb E}_{{\mathbb P}}\left[f(x_n(\delta),\,Y_{n+1})\right],\,{\mathbb V}_{\mathbb P}[f(x_n(\delta),\,Y_{n+1})]\big):\delta\geq 0\Big\}
\end{eqnarray*}
cannot be computed by the decision maker because he/she does not know the data generating model $\mathbb P$, so it is natural to approximate the frontier using resampling methods. One such approach uses the well known {\it bootstrap} procedure \cite{ET}, which we now describe, and  formally state in Algorithm \ref{algo:brmvf}.

Specifically, suppose the decision maker has a data set $\mathcal{D} = \{Y_1,\cdots,\,Y_n\}$ and the associated empirical distribution ${\mathbb{P}}_{n}$. To approximate the out-of-sample behavior of different possible data sets (of size $n$), one may generate a so-called \textit{bootstrap data set}, call it $\mathcal{D}^{(1)}$, by simulating $n$ new i.i.d. data points from the empirical distribution ${\mathbb{P}}_{n}$. Associated with this bootstrap data set is the \textit{bootstrap empirical distribution} ${\mathbb{P}}^{(1)}_{n}$   \cite{ET}. This process can be repeated as many times as desired, with $\mathcal{D}^{(j)}$ and ${\mathbb{P}}^{(j)}_{n}$ denoting the bootstrap data set and bootstrap empirical distribution generated at repeat $j$. We denote the number of bootstrap samples by $k$ in Algorithm \ref{algo:brmvf}.

For each $\mathcal{D}^{(j)}$ and ${\mathbb{P}}^{(j)}_{n}$, we can compute (a family of) robust decisions $x^{(j)}(\delta)$ by solving the robust optimization problem defined in terms of the bootstrap empirical distribution ${\mathbb{P}}^{(j)}_{n}$ over a specified set of $\delta$ (step 4). The mean and variance of the reward for  $x^{(j)}(\delta)$ under the {\it original empirical distribution} ${\mathbb{P}}_{n}$, which we denote by $\mu^{(j)}(\delta)$ and $v^{(j)}(\delta)$ in steps 5 and 6, can then be computed. The $k$ bootstrap samples produces the mean-variance pairs $(\mu^{(j)}(\delta),\,v^{(j)}(\delta))$, $j=1,\cdots,\,k$. Averaging these gives an estimate of the out-of-sample mean-variance frontier (steps 7 and 8).

\begin{algorithm}[h!]
\DontPrintSemicolon
\KwIn{Data set $\mathcal{D}=\{Y_1,\ldots,Y_n\}$; ambiguity parameter grid $\mathcal{G} =
\{\delta_{1},\ldots,\delta_{m}\}$.}
\KwOut{Mean and variance of out-of-sample reward parametrized by $\delta\in\mathcal{G}$.}

   \For{$j \gets 1$ \textbf{to} $k$} {

        $\mathcal{D}^{(j)} \gets$ bootstrap data set (sample $n$ i.i.d. data points from $\mathbb{P}_{n}$) \;

	\For{$i \gets 1$ \textbf{to} $m$} {
        	$x^{(j)}(\delta_{i}) \gets \displaystyle\arg\max_x \, \min_{{\mathbb Q}} \Big\{\mathbb{E}_{{\mathbb Q}}\big[f(x,Y)\big] + \frac{1}{\delta_{i}} \mathcal{H}_{\phi}({\mathbb Q} \,|\, \mathbb{P}^{(j)}_{n})\Big\}$,

        	$\mu^{(j)}(\delta_{i}) \gets \mathbb{E}_{\mathbb{P}_{n}}\big[f(x^{(j)}(\delta_{i}),Y)\big]$,

        	$v^{(j)}(\delta_{i}) \gets \mathbb{V}_{\mathbb{P}_{n}}\big[f(x^{(j)}(\delta_{i}),Y)\big]$.
	}
    }
		
		$\hat{\mu}_n(\delta_{i}) \gets \frac{1}{k}\sum\limits^{k}_{j=1}\mu^{(j)}(\delta_{i})$, for all $\delta_i\in{\mathcal G}$\;
		
		$\hat{v}_n(\delta_{i}) \gets \frac{1}{k}\sum\limits^{k}_{j=1}v^{(j)}(\delta_{i}) + \frac{1}{k-1}\sum\limits^{k}_{j=1}\Big(\mu^{(j)}(\delta_{i}) - \hat{\mu}_n(\delta_{i})\Big)^{2}$, for all $\delta_i\in{\mathcal G}$\;

\Return{$\big\{(\hat{\mu}_n(\delta_{i}),\,\hat{v}_n(\delta_i))\,:\,i=1,...,m\big\}$}\;
\caption{{\sc Bootstrap Estimate of the Out-of-Sample Robust Mean-Variance Frontier Generated by Robust Solutions}}
\label{algo:brmvf}
\end{algorithm}

In the next section, we consider three applications, inventory control, portfolio optimization, and logistic regression. We illustrate various aspects of our theory and show how the bootstrap robust mean-variance frontier given in Algorithm \ref{algo:brmvf} can be used to effectively calibrate ambiguity parameters in such settings.

\section{Examples}
\label{sec:applications}
We consider three examples. The first is a simulation experiment of the robust inventory control problem that illustrates key elements of our theory, while the next two examples are out-of-sample tests with real data. For these experiments, the penalty function ${\mathcal H}_\phi$ is relative entropy.

\subsection{Example 1: Inventory Control}

We first consider a simulation example with
reward
\begin{eqnarray}
\label{eq:objective_newsvendor}
f(x,Y) = r \min\left\{x,Y\right\} - cx.
\end{eqnarray}
This is a so-called inventory problem where $x$ is the order quantity (decision), $Y$ is the random demand, and $r$ and $c$ are the revenue and cost parameters.

The demand distribution $\mathbb{P}$ is a mixture of two exponential distributions, ${\rm Exp}(\lambda_{L})$ and ${\rm Exp}(\lambda_{H})$, where $\lambda_{L}$ and $\lambda_{H}$ are the rate parameters. This may correspond to two demand regimes (high and low) with different demand characteristics. For this numerical example, we set the mean values as $\lambda_{L}^{-1} = 10$ and $\lambda_{H}^{-1} = 100$, and revenue and cost parameters $r = 30$ and $c = 2$. The probability that demand is drawn from the low segment is $0.7$ (or equivalently, the probability that demand is drawn from the high segment is $0.3$).

We run the following experiment. The decision maker is initially shown $n$ data points $Y_{1},\ldots,Y_{n}$ drawn \textit{i.i.d.} from the mixture distribution $\mathbb{P}$. The decision maker then optimizes the robust objective function under the empirical distribution ${\mathbb{P}}_n$ to produce the optimal robust order quantity $x^{\star}_{n}(\delta)$. Another data point $Y_{n+1}$ is then generated from $\mathbb{P}$, independent of the previous data points, and the objective value $f(x^{\star}_{n}(\delta),Y_{n+1})$ is recorded. The out-of-sample mean and variance $\mathbb{E}_{\mathbb{P}}\big[f(x^{\star}_{n}(\delta),Y_{n+1})\big]$ and $\mathbb{V}_{\mathbb{P}}\big[f(x^{\star}_{n}(\delta),Y_{n+1})\big]$ are approximated by running the experiment $K$ times, where $K$ is some large number, each time using a newly generated data set $Y_{1},\ldots,Y_{n},Y_{n+1} \sim \mathbb{P}$; the sample mean and variance are computed over the $K$ repeats.

In Figure \ref{fig:OSfrontiers}, we plot the pair $(\mathbb{E}_{\mathbb{P}}\big[f(x^{\star}_{n}(\delta),Y_{n+1})\big], \mathbb{V}_{\mathbb{P}}\big[f(x^{\star}_{n}(\delta),Y_{n+1})\big])$ for different sample sizes $n = 10, 30, 50$
with the marks on a given line correspond to different values of $\delta$; the right-most mark corresponds to $\delta = 0$ (empirical). We also plot the ``true" robust mean-variance frontier, i.e., the pair $(\mathbb{E}_{\mathbb{P}}\big[f(x^{\star}(\delta),Y)\big], \mathbb{V}_{\mathbb{P}}\big[f(x^{\star}(\delta),Y)\big])$, which is independent of $n$.

Consistent with Theorem \ref{thm:expansion_robust}, we observe significant out-of-sample sensitivity (variance) reduction when $\delta$ is small with minimal impact on the out-of-sample expected reward. Our theory says nothing about the impact of large values of $\delta$, though we see that the benefit of robustness is diminishing as $\delta$ increases, with the rate of sensitivity reduction diminishing and the expected reward being reduced at an increasing rate, as $\delta$ increases.

\begin{figure}[!h]
\centering
\includegraphics[scale=0.45]{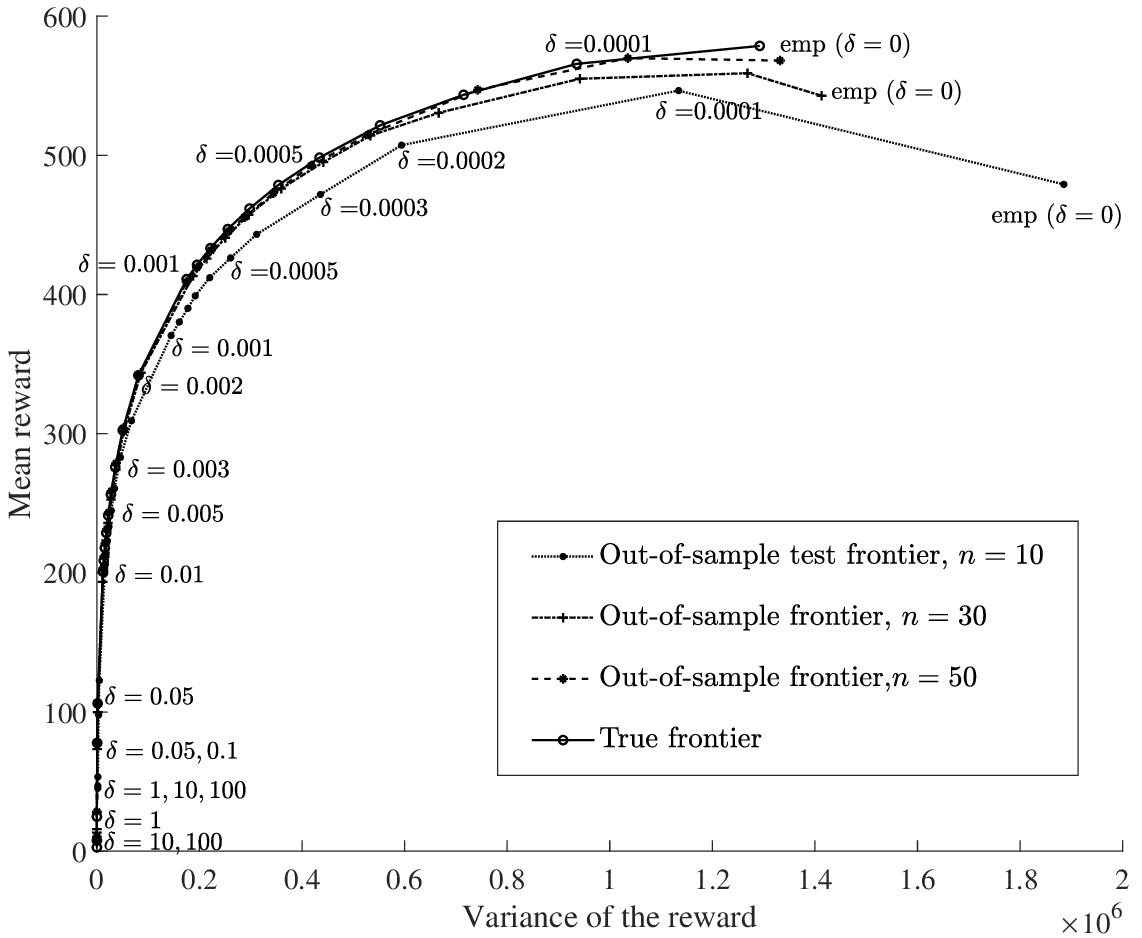}
\vspace{0.25cm}
\caption{Out-of-sample robust mean-variance frontiers for $n=10, 30$ and $50$ data points, and the true frontier generated by solutions of DRO under the data generating model for different values of the robustness parameter $\delta$.}
\label{fig:OSfrontiers}
\end{figure}

Figure \ref{fig:OSfrontiers} shows that as the sample size $n$ increases, the gap between the out-of-sample mean-variance frontiers and the ``true" robust mean-variance frontier gets smaller. This gap can be explained by Proposition \ref{prop:expansion_robust} which shows that the difference between these  frontiers should go to zero like $O(n^{-1})$. This is shown in Figure \ref{fig:maximum_frontier_gap} which plots the maximum gap (over $\delta$) between the out-of-sample frontier and the true frontier under the true frontier for different values of $n$.

\begin{figure}[!h]
\centering
\includegraphics[scale=0.4]{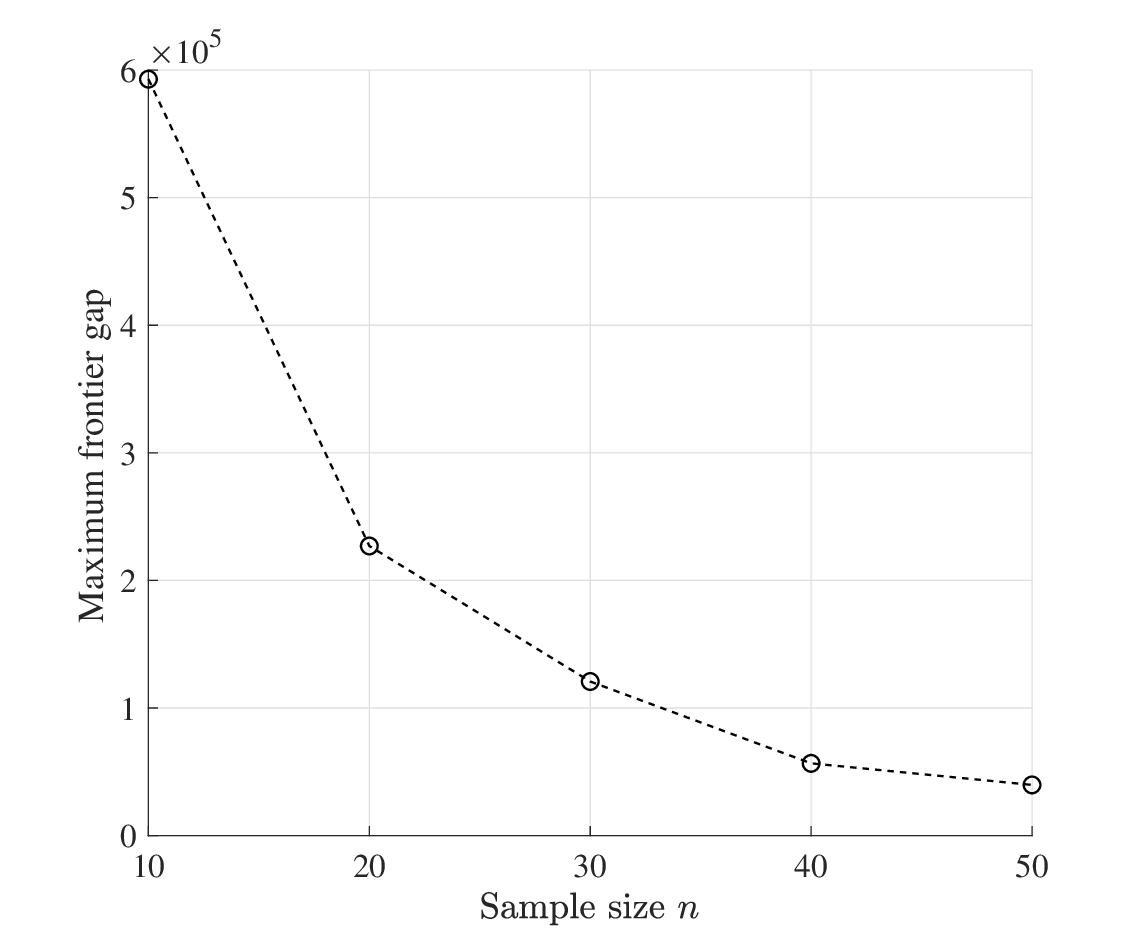}
\vspace{0.5cm}
\caption{Gap between the out-of-sample frontier and true frontier as $n$ gets large.}
\label{fig:maximum_frontier_gap}
\end{figure}

In the previous section, we proposed the bootstrap frontier as an approximation to the out-of-sample frontier. We next investigate the quality of this approximation for various sample sizes $n$. From the perspective of calibration, it is not necessary for the bootstrap frontier to equal the out-of-sample frontier, it need only preserve the \textit{relative shape}. For example, if the only difference between the bootstrap frontier and the out-of-sample frontier was that it was double its size (in both the mean and variance), the choice of $\delta$ should be the same when using either frontier since the relative trade-off between mean and variance is identical. In light of this observation, in Figures \ref{fig:OSvBoot_n=10_30_50} (A), (B), and (C) we plot the \textit{normalized} bootstrap and out-of-sample frontiers, where  the change in the mean and variance has been normalized to 1, for various sample sizes $n = 10, 30, 50$, respectively.
It is clear that as the sample size increases, the bootstrap frontier more closely approximates the relative shape of the out-of-sample frontier.

\begin{figure}
[h]
        \centering
        \begin{subfigure}[ht]{0.47\textwidth}
               \includegraphics[width=7cm]{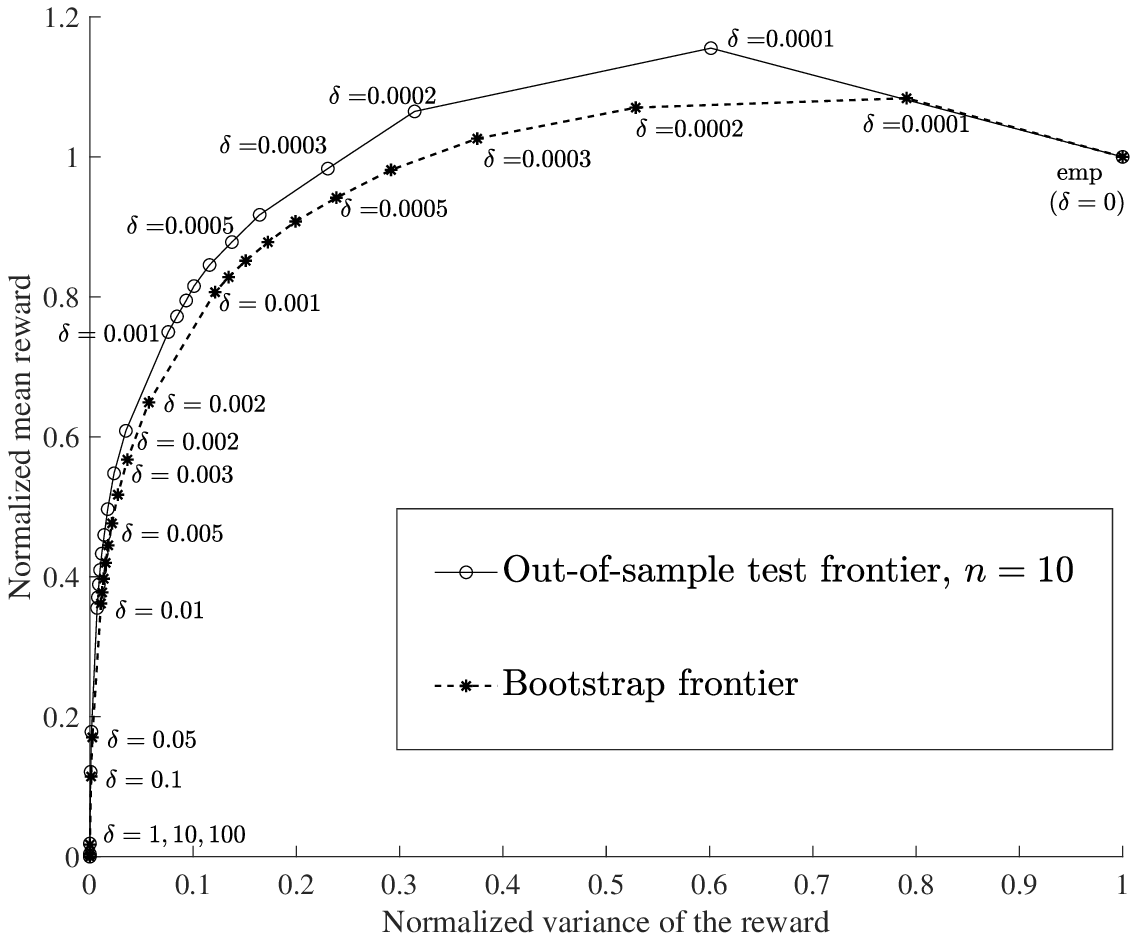}
                \caption{$n = 10$.}
        \end{subfigure}
       \hspace{0.5cm}
        \begin{subfigure}[ht]{0.47\textwidth}
                \includegraphics[width=7cm]{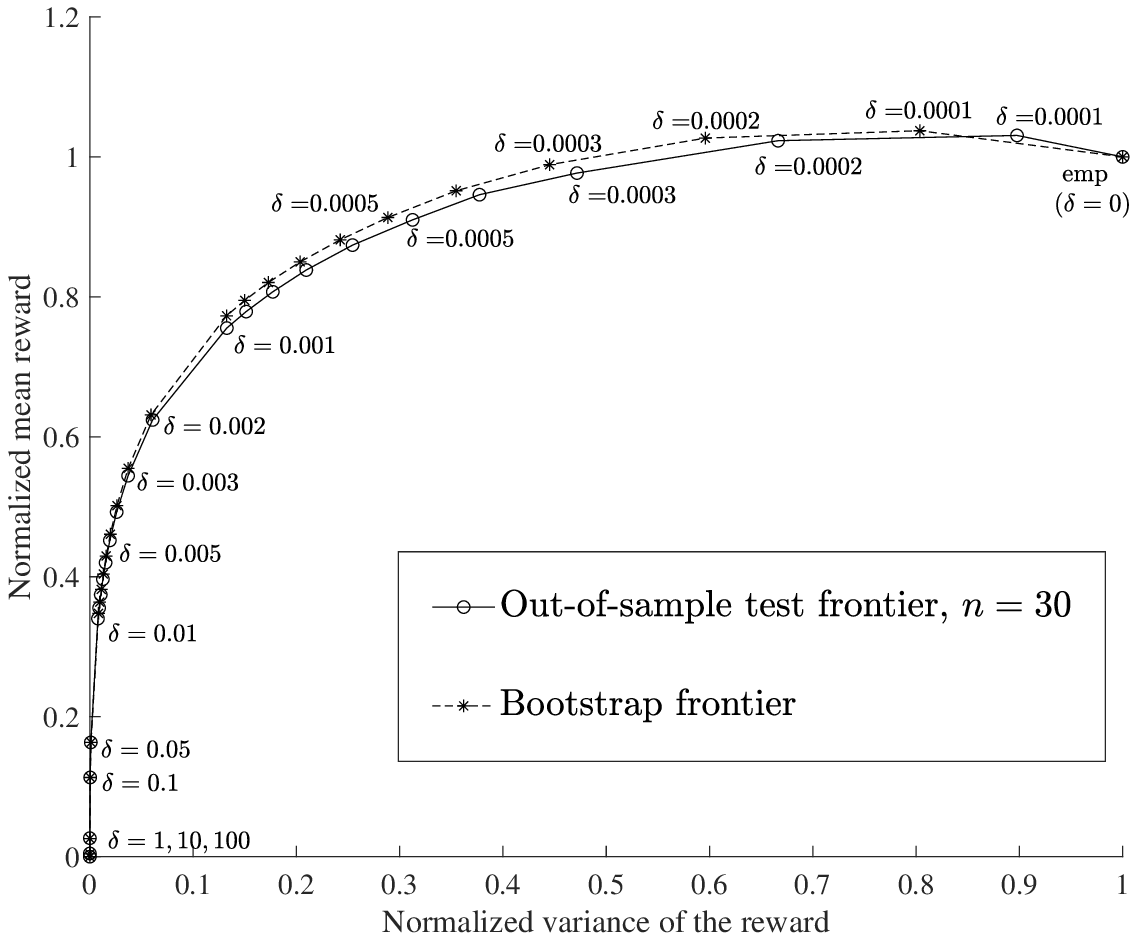}
                \caption{$n = 30$.}
        \end{subfigure}
        \vspace{0.5cm}
						
        \begin{subfigure}[ht]{0.47\textwidth}
                \includegraphics[width=7cm]{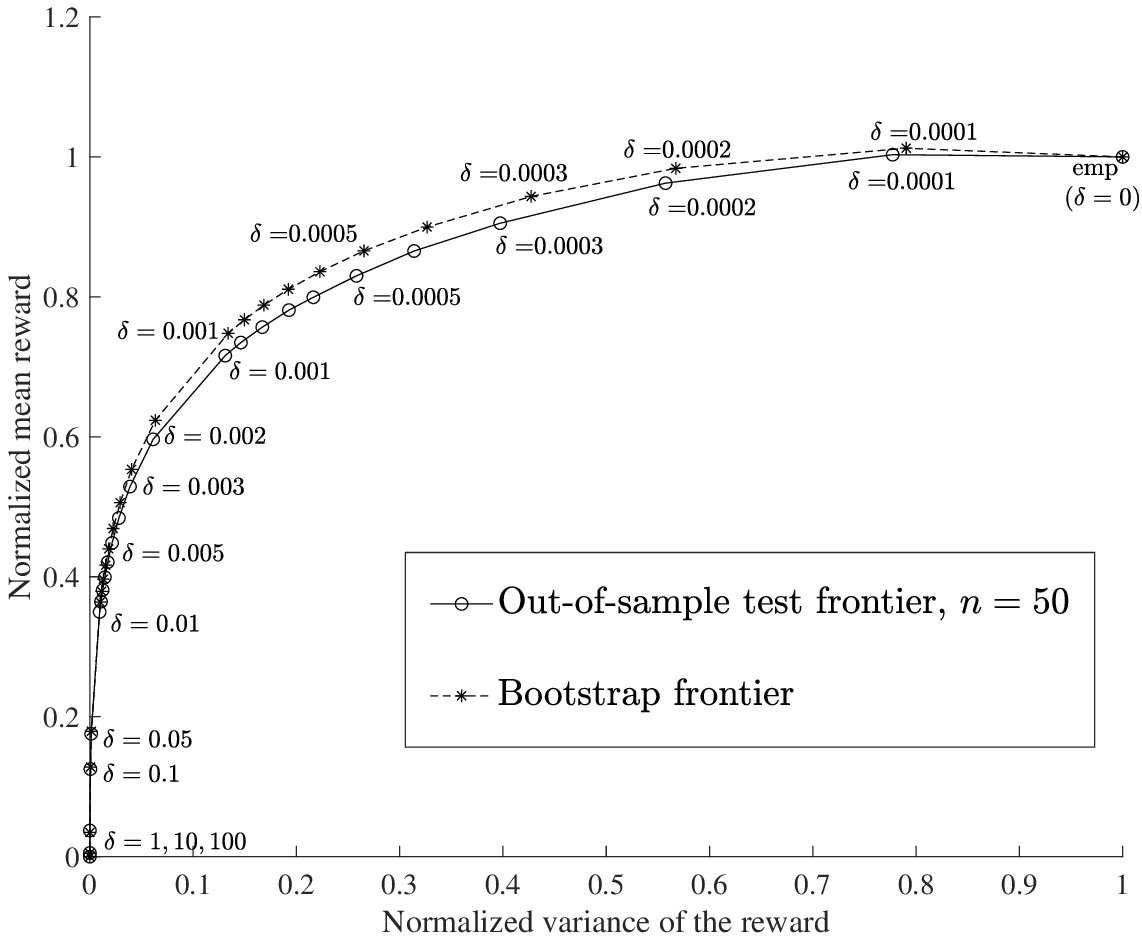}
                \caption{$n = 50$.}
                \end{subfigure}
\vspace{0.5cm}
\caption{Bootstrap frontier vs. out-of-sample frontier (with normalization). Both frontiers are scaled and normalized so that both the mean and variance equal $1$ when $\delta=0$ (i.e., empirical optimization) and $0$ in the most robust case ($\delta=100$). We see that as $n$ increases, the points on the frontier corresponding to the same values of $\delta$ converge.}
\label{fig:OSvBoot_n=10_30_50}
\end{figure}

\subsection{Example 2: Portfolio Optimization}
\label{subsec:app2}

In our second application, we consider real monthly return data from the ``10 Industry Portfolios'' data set of \cite{French}. The reward function is exponential utility of returns
\begin{eqnarray}
f(x,R) = -\exp\left(-\gamma R'x\right),
\label{utility}
\end{eqnarray}
where $x\in\mathbb{R}^{d}$ is the portfolio vector (decision variables), $R\in\mathbb{R}^{d}$ is the vector of random returns, and $\gamma$ is the risk-aversion parameter. To simplify the experiments, we choose a risk-aversion parameter $\gamma = 1$. For the purposes of this example, we impose a budget constraint $1' x = 1$ and assume that asset holdings are bounded, $-1\leq x_{i}\leq 1$, $i=1,...,d$.

We conduct the following experiment. We have $d = 10$ assets and are interested to see how robust optimization and our approach for calibrating $\delta$ perform when we estimate the $10$-dimensional joint distribution with relatively few data points ($n = 50$, for the time period April 1968 to June 1972). The robust portfolios will be tested on the empirical distributions for monthly returns of future 50 month windows, July 1972 to September 1976,  October 1976 to December 1980 and January 1981 to March 1985 that do not overlap with the training set.

To begin, we solve the robust portfolio choice problem using the 50 monthly returns for the period April 1968 to June 1972 for different values of $\delta$ and construct the robust mean-variance frontier using the bootstrap procedure described in  Algorithm \ref{algo:brmvf}. Figure \ref{fig:bootstrap_rmvf} shows this frontier, around which we also mark the $+/-$ one standard deviations of the bootstrap samples in both the mean and variance dimensions. Empirical optimization corresponds to the point $\delta = 0$, and as predicted by Theorem \ref{thm:expansion_robust}, there is significantly more reduction in the variance as compared to the mean when $\delta$ is close to $0$. Our theory does not apply when $\delta$ is large. However, it shows that significant out-of-sample sensitivity reduction with minimal impact on the mean should be expected when $\delta$ is small, while Figure \ref{fig:bootstrap_rmvf} suggests that the benefits of robustness are diminishing in $\delta$, with the rate of sensitivity (variance) reduction decreasing and the rate of mean reduction increasing at $\delta$ increases\footnote{While our results apply in the regime when $\delta$ is small, we believe that similar methods can be used to derive expansions of the solution and out-of-sample reward in orders of $\eta\equiv 1/\delta$ around the solution of the DRO problem when $\eta=0$ ($\delta=\infty$), and to study the out-of-sample performance of DRO solutions in this regime. Our experiments suggest that this is the less interesting part of the robust mean-variance frontier.}.

\begin{figure}[ht]
\centering
\includegraphics[scale=0.3]{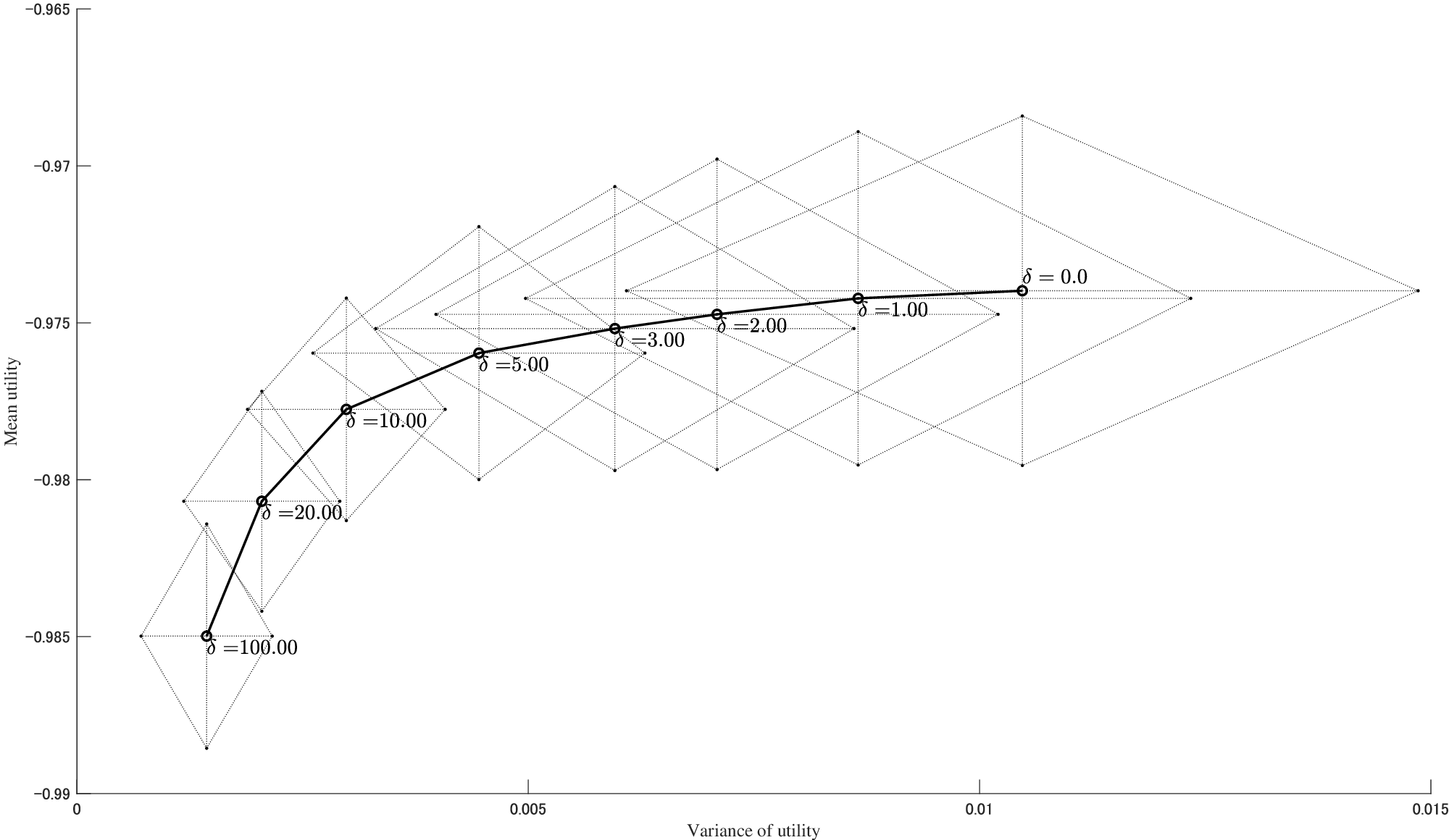}
\caption{Bootstrap robust mean-variance frontier for portfolio optimization generated using 50 months of monthly return data between April 1968 and June 1972.}
\label{fig:bootstrap_rmvf}
\end{figure}

\noindent \textit{Calibration and Out-of-sample Tests:}
The bootstrap frontier estimates the out-of-sample mean and variance for different decisions and can be used to calibrate $\delta$. For this example, the rate of variance reduction relative to the loss in the mean is substantial for $\delta\leq 5$, but this begins to diminish (and the cost of robustness increases) once $\delta$ exceeds $5$. A value of $\delta$ between $2$ and $5$ seems reasonable. While values of $\delta > 10$ may be preferred by some, the balance clearly tips towards loss in mean reward relative to variance reduction/robustness improvement.

Note also that a classical approach to calibration which optimizes the bootstrap estimate of the expected reward and ignores objective variability would select $\delta = 0$. This corresponds to empirical optimization and completely nullifies all the benefits of the robust model. More generally, while it is possible for DRO to produce solutions that out-perform empirical optimization in terms of the out-of-sample expected reward, as proven in Theorem \ref{thm:expansion_robust} such improvements are in general small relative to that of the variance reduction (when $\delta$ is small), and such solutions cannot be guaranteed even when model uncertainty is substantial, as shown by the portfolio choice example.

It is also interesting to compare calibration using the bootstrap frontier with the decisions obtained by solving the robust optimization problem
\begin{eqnarray}
\label{eq:robustSAA}
\max_x \, \min_{{\mathbb Q}\in\mathcal{U}_{\alpha}
} \left\{\mathbb{E}_{{\mathbb Q}}\big[f(x,Y)\right]\big\}
\end{eqnarray}
where the threshold $Q_{n}(\alpha)$ in the uncertainty set
\begin{eqnarray}
\mathcal{U}_{\alpha} = \left\{\mathbb{Q} : \mathcal{H}(\mathbb{Q}\,|\,\mathbb{P}_{n}) \leq Q_{n}(\alpha)\right\}
\end{eqnarray}
is the $(1-\alpha)$-quantile of the distribution of $\mathcal{H}(\mathbb{Q}\,|\,\mathbb{P}_{n})$ that we generate by simulating distributions ${\mathbb Q}$ by bootstrapping from ${\mathbb P}_n$. Confidence levels $1 - \alpha$ with value $90\%$, $95\%$, or $99\%$ are commonly suggested in the robust optimization literature.

It follows from duality that for any threshold $Q_n(\alpha)$ there is a unique ambiguity parameter value $\delta=\delta_{\alpha} > 0$, for which the solution of \eqref{eq:robustSAA} coincides with our robust solution $x(\delta_{\alpha})$; see Corollary 3 in \cite{ben2013} and also Appendix \ref{sec:significance}. In Table \ref{table:alpha_delta} we report the ambiguity parameter values $\delta_{\alpha}$ for significance levels $\alpha$ corresponding to different points on the robust mean-variance frontier in Figure \ref{fig:bootstrap_rmvf}.

{\footnotesize
\begin{table}[ht]
\caption{Corresponding ambiguity parameter values $\delta_{\alpha}$ for various traditional values of the significance level $\alpha$}
\centering
\begin{tabular}{cc}
\\
\hline
Significance level $\alpha$	 & Ambiguity values $\delta_{\alpha}$ \\
\hline
$\approx$ 1											&    $<$10 \\
0.8													& 23 \\
0.10   		 									 	& 30                                              \\
0.05   		 									 	& 31                                              \\
0.01   												 & 33                                              \\
\hline
\end{tabular}
\label{table:alpha_delta}
\end{table}
}

While robust decisions associated with the ``classical" significance levels of $\alpha=0.01, 0.05, 0.1$ in Table \ref{table:alpha_delta} may or may not perform well out-of-sample in any given application, it is clear that the range of  $\delta_\alpha$ associated with these significant levels is limited. This directly impacts the range of possible solutions available to the decision maker, which in this example are concentrated on the ``extremely conservative" region of the bootstrap  frontier (Figure \ref{fig:bootstrap_rmvf}). The values of $\delta \leq 10$ associated with the desirable part of the frontier in Figure \ref{fig:bootstrap_rmvf} correspond to significance levels $\alpha$ very close to $1$. These significance levels are difficult to estimate using simulation because the associated quantiles of ${\mathcal H}({\mathbb Q}\,|\,{\mathbb P}_n)$ are very close to $0$. While the asymptotic properties of ${\mathcal H}({\mathbb Q}\,|\,{\mathbb P}_n)$ can  be used to estimate $\alpha$ when the support of the data generating distribution $\mathbb P$ is finite and $\phi(z)$ is sufficiently regular,
this seems pointless if we treat $\alpha$ as a parametrization of the trade-off between mean and sensitivity and accept that there is nothing special about classical significance levels when it comes to out-of-sample performance.

We next analyze the out-of-sample performance of the robust solutions obtained by solving the in-sample problem with $\delta = 1, 2, 3, 5, 10$ (as suggested by the bootstrap frontier in Figure \ref{fig:bootstrap_rmvf}), the solutions of \eqref{eq:robustSAA} corresponding to $\delta_{\alpha} = 30$ ($\alpha = 0.10)$, $\delta_{\alpha} = 31$ ($\alpha = 0.05)$, $\delta_{\alpha} = 33$ ($\alpha = 0.01)$, and the solution of the empirical optimization problem (i.e., the robust solution with $\delta = 0$). We test each of the solutions on three out-of-sample test sets of data size $n = 50$ that do not overlap with the training data. Out-of-sample results are shown in Figure \ref{fig:portfolio_out-of-sample_tests}. Note that in the figure we report the mean and variance of the utility function $f(x,R)$ defined in \eqref{utility}, which is consistent with the framework of the robust optimization model in this paper.

\begin{figure}[h!]
\centering
\includegraphics[scale=0.6]{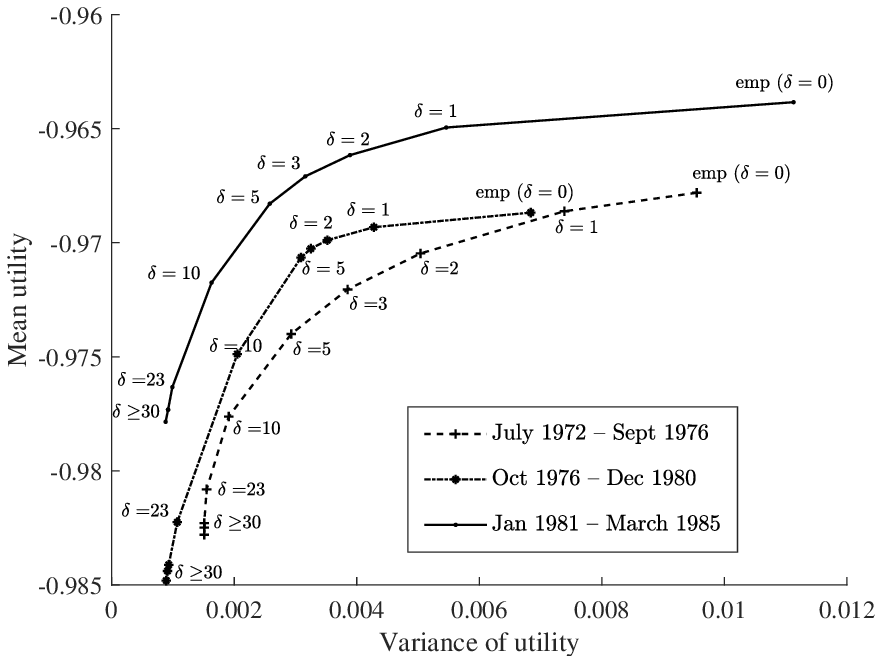}
\vspace{0.25cm}
\caption{Three out-of-sample robust mean-variance frontiers for the portfolio problem.\\
The frontiers are the average mean and variance over test data sets of 50 months.}
\label{fig:portfolio_out-of-sample_tests}
\end{figure}

Figure \ref{fig:portfolio_out-of-sample_tests}
 consistently shows that as the value of $\delta$ (``robustness") increases from zero, the mean performance degrades but the objective variability reduces, which is expected from our theory. Consistent with the bootstrap frontier (Figure \ref{fig:bootstrap_rmvf}), the portfolios associated with ``high confidence uncertainty sets" have low variance though the impact on expected utility is substantial.

\subsection{Example 3: Logistic Regression}
As final application we apply robust optimization to logistic regression which we evaluate on the WDBC breast cancer diagnosis data set \cite{Lichman}.

The reward function for logistic regression is given by
\begin{equation}
f((x,x_0),(Y,Y_0))=\ln (1 + \exp( -Y_0(x' Y+x_0) )),
\label{eq:reward_of_logistic_regression}
\end{equation}
where $Y_0\in\{-1,\,1\}$ is the binary label, $Y$ is the vector of covariates, and $x$ and $x_0$ are decision variables representing coefficients and intercept, respectively, of the linear model for classification.
Ordinary logistic regression is formulated as the maximization of the sample average of \eqref{eq:reward_of_logistic_regression}.

To demonstrate the out-of-sample behavior of robust maximum likelihood, we solve the ordinary/robust likelihood maximization problem using the first half of the WDBC breast cancer diagnosis data set \cite{Lichman}, i.e., 285 out of the 569 samples, and compute the log-likelihood and the variance of the log-likelihood of the resulting model using the remaining half of the samples, i.e., 284 out of 569. Figure \ref{fig:bsf_logistic_regression} shows both the bootstrap frontier and the frontier obtained from the out-of-sample test. Once again, choices of $\delta$ that deliver good out-of-sample log-likelihood can be obtained using the bootstrap estimate of the robust frontier with small values of $\delta$ delivering significant sensitivity (variance) reduction with minimal impact on the expected reward. As in the case of Figure \ref{fig:bootstrap_rmvf}, the benefits of robustness diminish as $\delta$ increases.

\begin{figure}[h]
\begin{center}
\includegraphics[scale=0.4]{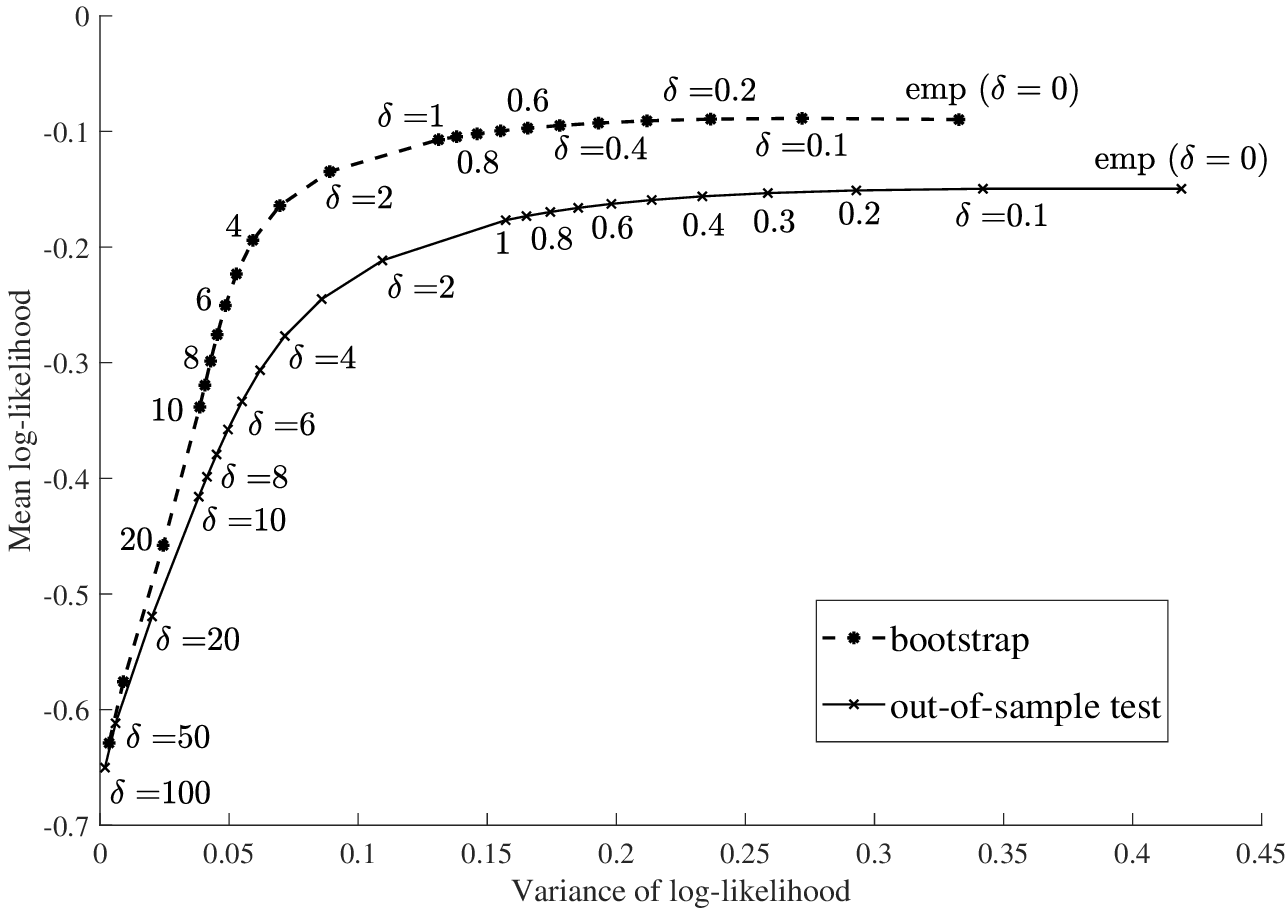}
\vspace{0.5cm}
\caption{Bootstrap frontier vs. out-of-sample test frontier with WDBC breast cancer diagnosis data set.\\
The out-of-sample test frontier shows the mean and variance of the log-likelihood of the second-half of the samples of the WDBC data set (i.e., 284 samples) on the basis of solutions obtained by using the first-half of those (i.e., 285 samples). Here, we used three attributes: no.2, no.24, and no.25 (out of 30 available covariates), which are found to be best possibly predictive in the paper \cite{MSW}. To solve the optimization problems, we used RNUOPT (NTT DATA Mathematical Systems Inc.), a nonlinear optimization solver package.}
\label{fig:bsf_logistic_regression}
\end{center}
\end{figure}


\section{Conclusions}
\label{sec:conclusions}

Proper calibration of DRO models requires a principled understanding of how the distribution of the out-of-sample reward depends on the ``robustness parameter." In this paper, we studied out-of-sample properties of robust empirical optimization and developed a theory for data-driven calibration of the robustness parameter.

Robustness is closely related to controlling the variance of the reward distribution as this reduces sensitivity of the mean reward to misspecification. Our main results show that the first-order benefit of ``little bit of robustness" is a significant reduction in the variance of the out-of-sample reward while the impact on the mean is almost an order of magnitude smaller. Our results imply that the robustness parameter should be calibrated by making trade-offs between estimates of the out-of-sample mean and variance.

To calibrate the robustness parameter, we introduced the robust mean-variance frontier and showed that it can be approximated using resampling methods like the bootstrap. We applied the robust mean-variance frontiers to three applications: inventory control, portfolio optimization and logistic regression. Our results showed that classical calibration methods that match ``standard" confidence levels (e.g., $95\%$) are typically associated with an excessively large robustness parameter and overly pessimistic solutions that perform poorly out-of-sample, while ignoring the variance and calibrating purely on the basis of the mean can lead to a robustness parameter that is too small and a solution that misses out on the first-order benefits of robust optimization.


\subsection*{Acknowledgments}
J. Gotoh is supported in part by JSPS KAKENHI Grant 19H02379, 19H00808, and 16H01833.
M.J. Kim is supported in part by the Natural Sciences and Engineering Research Council (NSERC) Discovery Grant RGPIN-2015-04019.
A.E.B. Lim is supported in part by the Singapore Ministry of Education Academic Research Fund Tier 2 MOE2016-T2-1-086.




\newpage

\appendix

\section{Asymptotics: General results}
\label{sec:AN_general}
Let
\begin{eqnarray}
x^{\star} & := &  \arg \max_{x\in{\mathcal X}} \Big\{ {\mathbb E}_{\mathbb P}\big[ f(x,\,Y)\big] \Big\}, \label{eq:appendix_pop} \\ [5pt]
x_n & := &\arg \max_{x\in{\mathcal X}} \Big\{ \mathbb{E}_{{\mathbb P}_n} \big[ f(x,\,Y)\big] \equiv \frac{1}{n}\sum_{i=1}^n f(x,\,Y_i) \Big\}.
\label{eq:appendix_saa}
\end{eqnarray}
The following result (Theorem 5.4 from \cite{SDR}) gives conditions under which $x_n$ is asymptotically consistent.
\begin{theorem}[Theorem 5.4, \cite{SDR}]
\label{thm:SDR}
Let $x_n$ and $x^\star$ be defined in \eqref{eq:appendix_pop}--\eqref{eq:appendix_saa}. Suppose that
\begin{enumerate}
\item[(i)] $f(\cdot,\,Y)$ is upper semicontinuous for $\mathbb P$-a.s. every $Y$,
\item[(ii)] $f(\cdot,\,Y)$ is concave in $x$, $\mathbb P$ almost surely,
\item[(iii)] the set $\mathcal X$ is closed and convex,
\item[(iv)]  $F(x) := {\mathbb E}_{\mathbb P}[f(x,\,Y)]$ is upper semicontinuous and there  exists a point $\bar{x}\in{\mathcal X}$ such that $F(x)>-\infty$ for all $x$ in a neighborhood of $\bar x$,
\item[(v)]  the set ${\mathcal S}$ of optimal solutions of the true problem \eqref{eq:appendix_pop} is nonempty and bounded, and
\item[(vi)] the LLN holds pointwise: For every $x\in{\mathcal X}$
\begin{eqnarray*}
\lim_{n\rightarrow\infty}\frac{1}{n}\sum_{i=1}^n f(x,\,Y_i) = {\mathbb E}_{\mathbb P}[f(x,\,Y)]
\end{eqnarray*}
\end{enumerate}
Then $x_n\rightarrow x^\star$.
\end{theorem}

Theorem 5.21 from \cite{vdV} gives conditions under which $x_n$ from \eqref{eq:appendix_saa} is asymptotically normal.
\begin{theorem}[Theorem 5.21, \cite{vdV}] \label{theorem: AN}
Let $x_n$ and $x^\star$ be defined as in \eqref{eq:appendix_pop}--\eqref{eq:appendix_saa}.  For every $x$ in an open subset of Euclidean space, let $x\mapsto
\nabla_x f(x,\,Y)$ be a measurable vector-valued function such that, for every $x^1$ and $x^2$ in a neighborhood of $x^{\star}$ and a measurable function $F(Y)$ with ${\mathbb E}_{\mathbb P} [F(Y)^2]<\infty$,
\begin{eqnarray*}
\|\nabla_x f(x^1,\,Y)-\nabla_x
f(x^2,\,Y)\| \leq F(Y)\|x^1-x^2\|.
\end{eqnarray*}
Assume that ${\mathbb E}_{\mathbb P} \|\nabla_x^2 f(x^{\star},\,Y)\|^2<\infty$ and that the map $x\mapsto {\mathbb E}_{\mathbb P}[\nabla_x f(x,\,Y)]$
is differentiable at a solution $x^{\star}$ of the equation ${\mathbb E}_{\mathbb P}[\nabla_x f(x,\,Y)]=0$ with non-singular derivative matrix
\begin{eqnarray*}
\Sigma(x^{\star}) := \nabla_x {\mathbb E}_{\mathbb P} [\nabla_x f(x^{\star},\,Y)].
\end{eqnarray*}
If ${\mathbb E}_{{\mathbb P}_n}\big[\big\|\nabla_x f(x_n,\,Y)\big\|\big] = o_P(n^{-1/2})$,
and $x_n\overset{P}{\to}x^{\star}$, then
\begin{eqnarray*}
\sqrt{n}\left(x_n-x^{\star}\right)=-\Sigma(x^{\star})^{-1}\frac{1}{\sqrt n} \sum_{i=1}^n \nabla_x f(x^{\star},\,Y_i)+o_P(1).
\end{eqnarray*}
In particular, the sequence $\sqrt{n}(x_n-x^{\star})$ is asymptotically normal with mean $0$ and covariance matrix $\Sigma(x^{\star})^{-1}{\mathbb E}_{\mathbb P}[\nabla_x f(x^{\star},\,Y)\nabla_x f(x^{\star},\,Y)'](\Sigma(x^{\star})^{-1})'$.
\end{theorem}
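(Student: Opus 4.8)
The plan is to recognize Theorem~\ref{theorem: AN} as the classical asymptotic-normality result for $Z$-estimators (estimating equations) and to carry out the standard three-step argument. The first-order condition for $x_n$ makes it an approximate zero of the empirical map $\Psi_n(x):=\mathbb{E}_{\mathbb{P}_n}[\nabla_x g(x,Y)]$ --- more precisely $\sqrt{n}\,\Psi_n(x_n)=o_P(1)$, by the near-optimality hypothesis $\mathbb{E}_{\mathbb{P}_n}[\|\nabla_x g(x_n,Y)\|]=o_P(n^{-1/2})$ --- while $x^*$ is an exact zero of the population map $\Psi(x):=\mathbb{E}_{\mathbb{P}}[\nabla_x g(x,Y)]$ because it maximizes $\mathbb{E}_{\mathbb{P}}[g(x,Y)]$. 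Writing $\mathbb{G}_n f:=\sqrt{n}\,(\mathbb{E}_{\mathbb{P}_n}-\mathbb{E}_{\mathbb{P}})f$ for the empirical process, I would decompose
\[
\sqrt{n}\,\Psi_n(x_n)=\mathbb{G}_n\!\left[\nabla_x g(x_n,\cdot)\right]+\sqrt{n}\,\Psi(x_n),
\]
and analyze the two terms on the right separately.

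First, a stochastic-equicontinuity step shows $\mathbb{G}_n[\nabla_x g(x_n,\cdot)]=\mathbb{G}_n[\nabla_x g(x^*,\cdot)]+o_P(1)$: the class $\{\nabla_x g(x,\cdot):\|x-x^*\|\le\varepsilon\}$ is $\mathbb{P}$-Donsker because $x\mapsto\nabla_x g(x,Y)$ is Lipschitz with $L^2$ Lipschitz constant $F$ (so the bracketing numbers of this Euclidean-indexed class grow only polynomially), and the same Lipschitz bound together with dominated convergence gives $\mathbb{E}_{\mathbb{P}}\|\nabla_x g(x_n,\cdot)-\nabla_x g(x^*,\cdot)\|^2\to 0$ along the consistent sequence $x_n\overset{P}{\to}x^*$; asymptotic equicontinuity of a Donsker class then yields the claim. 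Second, differentiability of $x\mapsto\mathbb{E}_{\mathbb{P}}[\nabla_x g(x,Y)]$ at $x^*$ with derivative $\Sigma(x^*)$ and $\Psi(x^*)=0$ give the deterministic expansion $\sqrt{n}\,\Psi(x_n)=\sqrt{n}\,\Sigma(x^*)(x_n-x^*)+o_P(\sqrt{n}\,\|x_n-x^*\|)$. Combining with the previous display and $\sqrt{n}\,\Psi_n(x_n)=o_P(1)$ gives
\[
o_P(1)=\mathbb{G}_n[\nabla_x g(x^*,\cdot)]+\sqrt{n}\,\Sigma(x^*)(x_n-x^*)+o_P\!\left(1+\sqrt{n}\,\|x_n-x^*\|\right).
\]
Since $\mathbb{G}_n[\nabla_x g(x^*,\cdot)]=O_P(1)$ by the multivariate CLT and $\Sigma(x^*)$ is nonsingular, a first rearrangement yields $\sqrt{n}$-consistency $\sqrt{n}\,\|x_n-x^*\|=O_P(1)$, which absorbs the remainder term; a second rearrangement yields the linear representation $\sqrt{n}(x_n-x^*)=-\Sigma(x^*)^{-1}\mathbb{G}_n[\nabla_x g(x^*,\cdot)]+o_P(1)$. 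Because $\mathbb{E}_{\mathbb{P}}[\nabla_x g(x^*,Y)]=0$, this equals $-\Sigma(x^*)^{-1}\,\frac{1}{\sqrt{n}}\sum_{i=1}^n\nabla_x g(x^*,Y_i)+o_P(1)$, and the multivariate CLT together with Slutsky's theorem delivers asymptotic normality with covariance $\Sigma(x^*)^{-1}\mathbb{E}_{\mathbb{P}}[\nabla_x g(x^*,Y)\nabla_x g(x^*,Y)']\,(\Sigma(x^*)^{-1})'$.

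The main obstacle is the stochastic-equicontinuity step: once the empirical-process term evaluated at $x_n$ is tied to its value at the fixed point $x^*$, the remainder is elementary rearrangement and the CLT. Making equicontinuity rigorous requires uniform-in-$x$ control of $\mathbb{G}_n$ in a neighborhood of $x^*$, i.e.\ verifying a Donsker (or weaker maximal-inequality) property for the Lipschitz-parametrized score class --- this is precisely where the hypotheses $\|\nabla_x g(x^1,Y)-\nabla_x g(x^2,Y)\|\le F(Y)\|x^1-x^2\|$ with $\mathbb{E}[F(Y)^2]<\infty$ and $\mathbb{E}_{\mathbb{P}}\|\nabla_x^2 g(x^*,Y)\|^2<\infty$ enter, and it is the content imported from \cite{vdV}.
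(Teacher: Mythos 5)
Your proposal is correct and is essentially the same argument the paper relies on: the paper gives no proof of Theorem \ref{theorem: AN} but imports it from \cite{vdV} (Theorem 5.21 there), whose proof is exactly your three-step route --- Donsker/asymptotic-equicontinuity for the Lipschitz-parametrized score class to replace $\mathbb{G}_n[\nabla_x g(x_n,\cdot)]$ by $\mathbb{G}_n[\nabla_x g(x^*,\cdot)]$, differentiability of $x\mapsto \mathbb{E}_{\mathbb P}[\nabla_x g(x,Y)]$ at $x^*$ with nonsingular $\Sigma(x^*)$, then the rearrangement giving $\sqrt{n}$-consistency and the linear representation, finished by the CLT and Slutsky. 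One minor remark: your representation correctly carries the minus sign, $\sqrt{n}(x_n-x^*)=-\Sigma(x^*)^{-1}\frac{1}{\sqrt{n}}\sum_{i=1}^n\nabla_x g(x^*,Y_i)+o_P(1)$, as in \cite{vdV}, whereas the display in the statement omits it (a typo that leaves the asymptotic covariance, and hence all downstream uses in the paper, unaffected).
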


Under conditions that allow us to exchange the order of differentiation with respect to $x$ and integration with respect to $Y$, we have
\begin{eqnarray*}
\Sigma(x) =\nabla_x^2{\mathbb E}_{\mathbb P} [f(x,\,Y)] = \nabla_x {\mathbb E}_{\mathbb P} [\nabla_x f
(x,\,Y)] = {\mathbb E}_{\mathbb P}\big[\nabla_x^2 f (x,\,Y)\big].
\end{eqnarray*}
Note however that Theorem \ref{theorem: AN} does not require that $x\mapsto f(x,\,Y)$ is twice differentiable everywhere for $\Sigma(x)$ to exist.

\section{Proof of Proposition \ref{prop:consistent} (Consistency of solutions)}
\label{sec:consistency}

Observe that \eqref{emp_n} and \eqref{eq:robust_empirical_1} are standard empirical optimization problems, so it follows from  Theorem \ref{thm:SDR} that  $x_n(0)$ and $(x_n(\delta),\,c_n(\delta))$ are consistent if Assumptions \ref{ass2} and \ref{ass1} are satisfied.
\begin{proposition} \label{prop:consistent}
Suppose Assumptions \ref{ass2} and \ref{ass1} are satisfied. Then
$x_n(0)\overset{P}{\longrightarrow}x^{\star}(0)$ and $(x_n(\delta),\,c_n(\delta)) \overset{P}{\longrightarrow} (x^{\star}(\delta),\,c^{\star}(\delta))$.
\end{proposition}

\begin{proof}
We begin with $(x_n(\delta),\,c_n(\delta))$. Let
\begin{eqnarray*}
g(x, \, c, Y)
:=  c+\frac{1}{\delta}
\phi^*\Big(\delta(-f(x,\,Y)-c)\Big)
.
\end{eqnarray*}
We can write the objective functions in \eqref{eq:robust_empirical_1} and \eqref{eq:rob_dgm} as
${\mathbb E}_{{\mathbb P}_n} [g(x,\, c,\,Y)]$ and ${\mathbb E}_{{\mathbb P}} [g(x,\, c,\,Y)]$, respectively, as functions of $(x,c)$.
Since $\phi^*(\zeta)$ is convex and  non-decreasing in $\zeta$ and $\delta(-f(x,\, Y)-c)$ is jointly convex in $(x,\,c)$ for $\mathbb P$-a.e. $Y$, $g(x,\,c,\,Y)$ is jointly convex in $(x,\,c)$. It now follows from Theorem \ref{thm:SDR} (Theorem 5.4 in \cite{SDR}) that  $(x_n(\delta),\,c_n(\delta)) \overset{P}{\to} (x^{\star}(\delta),\,c^{\star}(\delta))$. Consistency of $x_n(0)$ also follows from the same result. 
\end{proof}

\section{Proof of Theorem \ref{thm:normality2}}
\label{sec:CLT}

The following result is a direct application of Theorem \ref{theorem: AN} to the first-order conditions \eqref{eq:foc_delta} of the robust optimization problem \eqref{eq:robust_empirical_1}. Theorem \ref{thm:normality2} is essentially a refinement of Proposition \ref{prop:normality1} that expresses these results in a form that enables us to analyze the out-of-sample reward distribution under the worst-case solution.

\begin{proposition} \label{prop:normality1}
Suppose Assumptions \ref{ass2} and \ref{ass1} hold.
Let $(x_n(\delta),\,c_n(\delta))$ solve the robust empirical optimization problem
\eqref{eq:robust_empirical_1} and $(x^{\star}(\delta),\,c^{\star}(\delta))$ solve the robust problem \eqref{eq:rob_dgm} under the data generating model $\mathbb P$. Define
\begin{eqnarray*}
A & = & {\mathbb E}_{\mathbb{P}}[- J_\psi (x^{\star}(\delta),\,c^{\star}(\delta))] \in {\mathbb R}^{(d+1)\times (d+1)},\\
B & = & {\mathbb E}_{\mathbb P}[\psi(x^{\star}(\delta),\,c^{\star}(\delta))\,\psi(x^{\star}(\delta),\,c^{\star}(\delta))'] \in {\mathbb R}^{(d+1) \times (d+1)},
\end{eqnarray*}
where $\psi(x,\,c)$ is given by \eqref{eq:psi} and $J_\psi$ denotes the Jacobian matrix of $\psi$, and
\[
V(\delta)
:= A^{-1} B {A^{-1}}' \in {\mathbb R}^{(d+1)\times(d+1)}.
\]
Then $(x_n(\delta),\,c_n(\delta))$ is jointly asymptotically normal where
\begin{eqnarray*}
\sqrt{n}\left[\begin{array}{c} x_n(\delta) -x^{\star}(\delta)\\ c_n(\delta) - c^{\star}(\delta)\end{array}\right] \overset{D}{\longrightarrow} N(0,\,V(\delta)),
\end{eqnarray*}
as $n\rightarrow\infty$.
\end{proposition}

\subsection*{Proof of Theorem \ref{thm:normality2}} The first part of this result concerns the convergence properties of the robust solution, and is a direct consequence of Proposition \ref{prop:consistent} (for consistency of $(x_n(\delta),\,c_n(\delta))$) and Proposition \ref{prop:normality1} (for the asymptotic normality results \eqref{eq:AN}).

The second part of this theorem, which we now prove, characterizes the relationship between the limiting distribution of $(x_n(\delta),\,c_n(\delta))$ and the limiting distribution of the empirical distribution $x_n(0)$ when $\delta$ is small. We begin by showing that $(x^\star(\delta),\,c^\star(\delta))$ exists and is continuously differentiable, following which, we use the first-order conditions \eqref{eq:foc_delta} to obtained the expansion \eqref{eq:rob_asymp_bias} of $(x^\star(\delta),\,c^\star(\delta))$ in the neighborhood of $\delta=0$ and the expression \eqref{eq:pi} for the asymptotic bias $\pi$.

\noindent {\it Existence and continuous differentiability of $(x^\star(\delta),\,c^\star(\delta))$:} We use the Implicit Function Theorem to show existence and determine the smoothness of $(x^{\star}(\delta),\,c^{\star}(\delta))$, and the  first-order conditions to derive the expansion. With a mild abuse of notation, let
\begin{eqnarray*}
g(\delta,\,x, \, c):=\left[\begin{array}{c} g_1(\delta,\,x,\,c) \\ g_2(\delta,\,x,\,c)\end{array}\right] := {\mathbb E}_{\mathbb P}[\psi(x,\,c)].
\end{eqnarray*}
The first-order conditions for \eqref{eq:robust_empirical_1} are
\begin{eqnarray}
g(\delta,\,x,\,c)=0,
\label{foc1}
\end{eqnarray}

Under the assumptions of $\phi(z)$ (see Theorem 3.2 in \cite{GKL}) the convex conjugate $\phi^*(\zeta)$ is twice continuously differentiable in the neighborhood of $\zeta=0$ and satisfies
\begin{eqnarray}
\phi^*(\zeta) = \zeta + \frac{\alpha_2}{2!}\zeta^2 +o(\zeta^2),
\label{eq:phi*}
\end{eqnarray}
where
\[
\alpha_2 = \frac{1}{\phi''(1)}.
\]
It follows that
\[
[\phi^*]'(\zeta) = 1 + \alpha_2 \zeta + o(\zeta)
\]
is continuously differentiable in a neighborhood of $\zeta=0$. This implies that for every fixed $(x,\,c)$, $g(\delta,\,x,\,c)$ is continuously differentiable in $\delta$ in some neighborhood of $\delta=0$, that
\begin{eqnarray*}
g(\delta,\,x, \, c)=\left[\begin{array}{c}{\mathbb E}_{{\mathbb P}}\Big[\nabla_x f(x,Y)\Big] -\frac{\delta}{\phi''(1)}{\mathbb E}_{\mathbb P}\Big[ \Big(f(x,Y)+c\Big)\nabla_x f(x,Y)\Big] + o(\delta) \\[10pt] \mathbb{E}_{{\mathbb P}}\Big[f(x,Y)+c \Big]+ O(\delta) \end{array}\right],
\end{eqnarray*}
and that
\begin{eqnarray}
g\big(0,\,x^{\star}(0),\,-{\mathbb E}_{{\mathbb P}} [f(x^{\star}(0)),\,Y)]\big)=0,
\label{foc1_delta0}
\end{eqnarray}
where $x^{\star}(0)$ is the solution of the empirical problem.
Since $f(x,\,Y)$ is twice continuously differentiable in $x$, $g(\delta,\,x,\,c)$ is continuously differentiable in a neighborhood of $\big(0,\,x^{\star}(0),\,-{\mathbb E}_{{\mathbb P}} [f(x^{\star}(0),\,Y)]\big)$, and
\begin{eqnarray*}
\lefteqn{J_{g,\,(x,\,c)}(\delta,\,x,\,c)\Big|_{(0,\,x^{\star}(0),\,-{\mathbb E}_{{\mathbb P}} [f(x^{\star}(0),\,Y)])}}\\
 & \equiv & \left.\left[\begin{array}{cc}\nabla_x g_1(\delta,\,x,\,c) & \nabla_c g_1(\delta,\,x,\,c) \\
\nabla_x g_2(\delta,\,x,\,c) & \nabla_c g_2(\delta,\,x,\,c)
\end{array}\right]\right|_{(0,\,x^{\star}(0),\, -{\mathbb E}_{{\mathbb P}} [f(x^{\star}(0),\,Y)]))}\\ & = & \left[\begin{array}{cc}{\mathbb E}_{\mathbb P}[\nabla_x^2f(x^{\star}(0),\,Y)] & 0 \\ 0 & 1\end{array}\right]
\end{eqnarray*}
is invertible. It follows from the Implicit Function Theorem that $(x^{\star}(\delta),\,c^{\star}(\delta))$ exists and is continuously differentiable in an open neighborhood of $\big(0,\,x^{\star}(0),\,-{\mathbb E}_{{\mathbb P}} [f(x^{\star}(0),\,Y)]\big)$, so
\begin{align}
x^{\star}(\delta) & = x^{\star}(0) + \pi \delta + o(\delta), \label{eq:temp_exp}\\
c^{\star}(\delta) & = -{\mathbb E}_{{\mathbb P}} [f(x^{\star}(0),\,Y)] + c_1 \delta + o(\delta) \nonumber \\
                  & = -{\mathbb E}_{{\mathbb P}} [f(x^{\star}(0),\,Y)] + O(\delta) \nonumber
\end{align}
in this neighborhood\footnote{It turns out that the value of $c_1$ is not required in our analysis, though if $\phi(z)$ is three times continuously differentiable, it can be shown that $c_1 = -\frac{1}{2}\frac{\phi
'''(1)}{\left[\phi''(1)\right]^2} \mathbb{V}_{{\mathbb P}}\big[f(x^{\star}(0),\,Y)\big]$.}.

\noindent {\it Taylor series expansion:} We already have \eqref{eq:temp_exp}. To compute $\pi$, observe from \eqref{foc1} that
\begin{eqnarray*}
\lefteqn{{\mathbb E}_{{\mathbb P}} \left[(\phi^*)'\Big(-\delta\left(f(x^{\star}(\delta),Y)+c^{\star}(\delta)\right)\Big)\nabla_x f(x^{\star}(\delta),Y)\right]}\nonumber \\ [10pt]
& = & {\mathbb E}_{{\mathbb P}} \Big[ \nabla_x f(x^{\star}(0),\,Y) + \delta \, \nabla^2_x f (x^{\star}(0),\,Y) \pi + o(\delta)\Big] \\  [10pt]
& & - \frac{\delta}{\phi''(1)}{\mathbb E}_{{\mathbb P}}\Big[\nabla_x f (x^{\star}(0),\,Y) \Big(f(x^{\star}(0),\,Y)-{\mathbb E}_{{\mathbb P}} [f(x^{\star}(0),\,Y)]\Big) + o(\delta) \Big]\\  [10pt]
& =&  \delta\Big\{{\mathbb E}_{\mathbb P}\Big[\nabla^2_x f (x^{\star}(0),\,Y)\Big] \pi
- \frac{1}{\phi''(1)} \mathrm{Cov}_{\mathbb{P}}\big[ f(x^{\star}(0),\,Y),\,\nabla_x f(x^{\star}(0),\,Y)\big] \Big\} +o(\delta)\\ [10pt]
& = & 0.
\end{eqnarray*}
The expression \eqref{eq:pi} for $\pi$ is such that the order $\delta$ term vanishes.

To obtain the expression for $V(\delta)$ observe from \eqref{eq:phi*} that we can write \eqref{eq:psi} as
\begin{eqnarray*}
\psi(x,\,c) = \left[\begin{array}{c} \begin{displaystyle}\nabla_x f(x,\,Y) - \frac{\delta}{\phi''(1)}\Big(f(x,\,Y) + c\Big) \nabla_x f(x,\,Y) + o(\delta)\end{displaystyle} \\  \vspace{-0.4cm}\\f(x,\,Y) + c + O(\delta) \end{array}\right] \equiv \left[\begin{array}{c}\psi_1(x,\,c)\\ \vspace{-0.4cm}\\ \psi_2(x,\,c)\end{array}\right].
\end{eqnarray*}
This implies that $J_\psi$, the Jacobian matrix of $\psi(x,c)$, has
\[
\left[
\begin{array}{ccc|c}
\nabla_{x_1} \psi_1 & \ldots & \nabla_{x_m} \psi_1 & \nabla_c \psi_1 \\
\hline
\nabla_{x_1} \psi_2 & \ldots & \nabla_{x_m} \psi_2 & \nabla_c \psi_2 \\
\end{array}
\right](x^{\star}(\delta),\,c^{\star}(\delta))
=
\left[
\begin{array}{c|c}
\nabla^2_x f(x^{\star}(0),\,Y) + O(\delta) & O(\delta) \\
\hline
\nabla_x f(x^{\star}(0),\,Y)' + O(\delta) & 1 \\
\end{array}
\right]
\]
and hence
\begin{eqnarray*}
A = {\mathbb E}_{\mathbb P}\left[-J_\psi(x^{\star}(\delta),\,c^{\star}(\delta))\right] =
 \left[\begin{array}{cc} - {\mathbb E}_{\mathbb P}\big[\nabla^2_x f(x^{\star}(0),\,Y)\big]
 & 0 \\
0 & -1\end{array}\right] + O(\delta),
\end{eqnarray*}
so
\begin{eqnarray*}
A^{-1} = - \left[\begin{array}{cc}  \begin{displaystyle} { {\mathbb E}_{\mathbb P}\big[\nabla^2_x f(x^{\star}(0),\,Y)\big]}^{-1} \end{displaystyle} & 0 \\
0 & 1\end{array}\right] + O(\delta) = {A^{-1}}'.
\end{eqnarray*}
Likewise
\begin{eqnarray*}
\psi(x^{\star}(\delta),\,c^{\star}(\delta)) = \left[\begin{array}{c} \nabla_x f (x^{\star}(0),\,Y) \\
f(x^{\star}(0),\,Y)-{\mathbb E}_{\mathbb P}[f(x^{\star}(0),\,Y)] \end{array}\right] + O(\delta),
\end{eqnarray*}
so
\begin{eqnarray*}
B & = & {\mathbb E}_{\mathbb P}\big[ \psi(x^{\star}(\delta),\,c^{\star}(\delta))\,
\psi(x^{\star}(\delta),\,c^{\star}(\delta))'\big] \\
& = &
\left[\begin{array}{cc}
{\mathbb V}_{\mathbb P}\big[\nabla_x f(x^{\star}(0),\,Y)\big] & \mathrm{Cov}_{\mathbb{P}}\big[\nabla_x f(x^{\star}(0),\,Y),\,f(x^{\star}(0),\,Y)\big] \\
\mathrm{Cov}_{\mathbb{P}}\big[\nabla_x f(x^{\star}(0),\,Y),\,f(x^{\star}(0),\,Y)\big]' & {\mathbb V}_{\mathbb P}\big[f(x^{\star}(0),\,Y)\big]\end{array}\right] + O(\delta).
\end{eqnarray*}
The expression for $V(\delta)= A^{-1} B (A^{-1})'$ now follows.

\section{Proof of Proposition \ref{prop:expansion}}
\label{sec:TS}

Taylor series implies
\begin{eqnarray*}
f(x+\delta \Delta,\,Y_{n+1})=f(x,\,Y_{n+1}) + \delta \Delta'\,\nabla_x f(x,\,Y_{n+1}) + \frac{1}{2}\delta^2\,\mathrm{tr}\Big(\Delta\Delta'\nabla^2_x f(x,\,Y_{n+1})\Big)+o(\delta^2).
\end{eqnarray*}
We obtain \eqref{eq:mean_f} by taking expectations and noting that $\Delta$ and $Y_{n+1}$ are independent. To derive \eqref{eq:var_f} observe firstly that
\begin{eqnarray*}
\lefteqn{\mathbb{E}_{\mathbb P}\big[f(x+\delta \Delta,\,Y_{n+1})\big]^2 = \mathbb{E}_{\mathbb P}\big[f(x,\,Y_{n+1})\big]^2 + 2\delta \,\mathbb{E}_{\mathbb P}\big[f(x,\,Y_{n+1})\big]\,\mathbb{E}_{\mathbb P}[\Delta]'\,\mathbb{E}_{\mathbb P}\big[\nabla_x f(x,\,Y_{n+1})\big]}  \\[5pt]
& & + \frac{\delta^2}{2}\left\{2\,\mathrm{tr}\Big(\mathbb{E}_{\mathbb P}\big[\Delta\big]\,\mathbb{E}_{\mathbb P}\big[\Delta\big]'\,\mathbb{E}_{\mathbb P}\big[\nabla_x f(x,\,Y_{n+1})\big]\,\mathbb{E}_{\mathbb P}\big[\nabla_x f(x,\,Y_{n+1})\big]'\Big) \right. \\
& & \left.+ 2 \, \mathrm{tr}\Big(\mathbb{E}_{\mathbb P}\big[\Delta\Delta'\big]\,\mathbb{E}_{\mathbb P}\big[f(x,\,Y_{n+1})\big]\,\mathbb{E}_{\mathbb P}\big[\nabla^2_x f(x,\,Y_{n+1})\big]\Big) \right\}  \\[5pt]
& = & \mathbb{E}_{\mathbb P}\big[f(x,\,Y_{n+1})\big]^2 + 2 \delta \mathbb{E}_{\mathbb P}\big[f(x,\,Y_{n+1})\big]\,\mathbb{E}_{\mathbb P}[\Delta]'\,\mathbb{E}_{\mathbb P}\big[\nabla_x f(x,\,Y_{n+1})\big]  \\[5pt]
& & + \frac{\delta^2}{2}\,\mathrm{tr}\left(2 \,\mathbb{E}_{\mathbb P}\big[\Delta\Delta'\big]\left\{\mathbb{E}_{\mathbb P}\big[\nabla_x f(x,\,Y_{n+1})\big] \, \mathbb{E}_{\mathbb P}\big[\nabla_x f(x,\,Y_{n+1})\big]' + \mathbb{E}_{\mathbb P}\big[f(x,\,Y_{n+1})\big] \, \mathbb{E}_{\mathbb P}\big[\nabla^2_x f(x,\,Y_{n+1})\big] \right\} \right. \\  [5pt]
& & \left. - 2 \, {\mathbb V}_{\mathbb P}[\Delta]\,\mathbb{E}_{\mathbb P}\big[\nabla_x f(x,\,Y_{n+1})\big]\,\mathbb{E}_{\mathbb P}\big[ \nabla_x f(x,\,Y_{n+1})\big]' \right),
\end{eqnarray*}
while expectations on both sides of a Taylor series expansion of $[f(x+\delta \Delta,\,Y_{n+1})]^2$ gives
\begin{eqnarray*}
\lefteqn{\mathbb{E}_{\mathbb P}\Big[[f(x+\delta \Delta,\,Y_{n+1})]^2\Big] = \mathbb{E}_{\mathbb P}\Big[[f(x,\,Y_{n+1})]^2\Big] + 2 \delta \,\mathbb{E}_{\mathbb P}[\Delta]' \, \mathbb{E}_{\mathbb P}\Big[f(x,\,Y_{n+1})\, \nabla_x f(x,\,Y_{n+1})\Big]}  \\[5pt]
& & + \frac{\delta^2}{2} \mathrm{tr} \Big(\mathbb{E}_{\mathbb P}[\Delta\Delta']\Big\{2\,\mathbb{E}_{\mathbb P}\big[\nabla_x f(x,\,Y_{n+1})\nabla_x f(x,\,Y_{n+1})'\big] + 2\,\mathbb{E}_{\mathbb P}\big[f(x,\,Y_{n+1})\, \nabla^2_x f(x,\,Y_{n+1})\big]\Big\}\Big) + o(\delta^2).
\end{eqnarray*}
It now follows that
\begin{eqnarray*}
\lefteqn{{\mathbb V}_{\mathbb P}[f(x+\delta \Delta,\,Y_{n+1})] }  \\[5pt]
& = &  {\mathbb V}_{\mathbb P}[f(x,\,Y_{n+1})] + 2 \delta \, \mathbb{E}_{\mathbb P}[\Delta]'\,\mathrm{Cov}_{\mathbb{P}}\big[f(x,\,Y_{n+1}),\, \nabla_x f(x,\,Y_{n+1})\big]  \\[5pt]
& & + \frac{\delta^2}{2}\left\{ \mathrm{tr}\Big( \mathbb{E}_{\mathbb P}[\Delta\Delta']\Big\{ 2 {\mathbb V}_{\mathbb P}\big[\nabla_x f(x,\,Y_{n+1})\big] + 2 \, \mathrm{Cov}_{\mathbb{P}} \big[f(x,\,Y_{n+1}),\,\nabla^2_x f(x,\,Y_{n+1})\big]\Big\}\Big) \right. \\[5pt]
& & \left. + 2 \, \mathrm{tr}\Big({\mathbb V}_{\mathbb P}[\Delta]\,\mathbb{E}_{\mathbb P}\big[\nabla_x f(x,\,Y_{n+1})\big]\,\mathbb{E}_{\mathbb P}\big[\nabla_x f(x,\,Y_{n+1})\big]'\Big)\right\} + o(\delta^2).
\end{eqnarray*}
When $\Delta$ is constant, the definition of a derivative implies \eqref{eq:var_derivatives} and the expansion of ${\mathbb V}_{\mathbb P}[f(x+\delta\Delta,\,Y_{n+1})]$ can be written as \eqref{eq:var_f}.

\section{Proof of Proposition \ref{prop:expansion_nominal}}
\label{sec:expansion_nominal}

We know from Proposition \ref{prop:normality_empirical} that
\eqref{eq:AN_for_emp_sol} holds. It now follows from Proposition \ref{prop:expansion} (with $\Delta\equiv \sqrt{\xi(0)} Z$ and $\delta=\frac{1}{\sqrt{n}}$) that
\begin{eqnarray*}
\lefteqn{
{\mathbb V}_{\mathbb P}\big[f(x_n(0),\,Y_{n+1})\big]
 = {\mathbb V}_{\mathbb P}\big[f\Big(x^{\star}(0)+\sqrt{\frac{\xi(0)}{n}}\big(Z + o_P(1)\big),\,Y_{n+1}\Big)\big]}\\  [5pt]
& = & {\mathbb V}_{\mathbb P}\big[f(x^{\star}(0),\,Y_{n+1})\big]  \\[5pt]
&   & + \frac{2}{\sqrt{n}} {\mathbb E}_{\mathbb P} \Big[\big({\sqrt{\xi(0)}}Z\big)'\Big]\,\mathrm{Cov}_{\mathbb{P}}\big[f(x^{\star}(0),\,Y_{n+1}),\,\nabla_x f(x^{\star}(0),\,Y_{n+1})\big] \\
&   & + \frac{1}{2n} \mathrm{tr}\Big( \mathbb{E}_{\mathbb P}\big[ \sqrt{\xi(0)}Z Z' \sqrt{\xi(0)}'\big]\,\nabla_x^2 {\mathbb V}_{\mathbb P}\big[f(x^{\star}(0),\,Y_{n+1})\big] \\
&   &  + 2 \mathbb{V}\big[\sqrt{\xi(0)} Z\big]\,{\mathbb E}_{\mathbb P}\big[\nabla_x f(x^{\star}(0),\,Y_{n+1})\big]\,{\mathbb E}_{\mathbb P}\big[\nabla_x f(x^{\star}(0),\,Y_{n+1})\big]'\Big) + o\Big(\frac{1}{n}\Big).
\end{eqnarray*}
Noting that $Z$ is standard normal, $\mathbb{E}_{\mathbb P}\big[\sqrt{\xi(0)}Z Z'\sqrt{\xi(0)}'\big]=\xi(0)$, and ${\mathbb E}_{\mathbb P}\big[\nabla_x f(x^{\star}(0),\,Y_{n+1})\big]=0$ (by the definition of $x^{\star}(0)$), we obtain \eqref{eq:asymp_var_nominal}. The expression \eqref{eq:asymp_mean_nominal} for the expected out-of-sample profit under the empirical optimal can be derived in the same way.

\section{Worst case sensitivity}
\label{sec:sensitivity}

Consider the family of probability measures $\{{\mathbb Q}(\delta)\,:\,\delta\geq 0\}$ defined by \eqref{eq:worst-case-Q}.
We begin by characterizing the properties of ${\mathbb Q}(\delta)$, following which we show that the worst-case sensitivity \eqref{eq:sensitivity} is given by the variance of the reward under ${\mathbb P}_n$. This interpretation of the variance and the expansion \eqref{eq:robust-mv-general} allows us to interpret worst-case optimization as a trade-off between maximizing the expected reward under the nominal distribution, and minimizing the worst-case sensitivity to model misspecification. To ease notation, we denote $f=[f_1,\cdots,\,f_n] \equiv[f(x,\,Y_1),\cdots,\,f(x,\,Y_n)]$.
\begin{proposition} \label{prop:sensitivity}
Suppose $\phi(z)$ satisfies Assumption \ref{ass2}.
Then for $\delta>0$ sufficiently small, the worst-case distribution
\begin{eqnarray*}
{\mathbb Q}(\delta)=[q_1^*(\delta),\cdots,\,q_n^*(\delta)],
\end{eqnarray*}
where
\begin{eqnarray}
q_i^*(\delta) = {q_i(c^{\star}(\delta),\,\delta)}
\label{eq:qi*}
\end{eqnarray}
with
\begin{eqnarray}
q_i(c,\,\delta) & = & {p}^n_i\,[\phi']^{-1}\Big(\delta\big[-f_i-c\big]\Big), \nonumber\\
c^{\star}(\delta) & = & \argmax_c \Big\{-\frac{1}{\delta}\sum_{i=1}^n {p}^n_i \phi^*\Big(\delta\big[-f_i-c\big]\Big)-c\Big\}.
\label{eq:c*}
\end{eqnarray}
${\mathbb Q}(\delta)$ is continuously differentiable in a neighborhood of $\delta=0$ with
\begin{eqnarray}
q^*_i(\delta) = {p}^n_i\Big\{1 - \frac{\delta}{\phi''(1)}\Big(f_i-{\mathbb E}_{{\mathbb P}_n}[f]\Big)\Big\} + o(\delta).
\label{eq:Q-expansion}
\end{eqnarray}
\end{proposition}

\begin{proof}
Let
\begin{eqnarray*}
f^{\star}(\delta)
:= \inf_{\mathbb Q}\Big\{\sum_{i=1}^nq_i f_i + \frac{1}{\delta}\sum_{i=1}^n {p}^n_i \phi\Big(\frac{q_i}{{p}^n_i}\Big)\Big\}.
\end{eqnarray*}
Denote the Lagrangian
\begin{eqnarray*}
L(q,\,c;\,\delta)=\sum_{i=1}^nq_i f_i + \frac{1}{\delta}\sum_{i=1}^n {p}^n_i \phi\Big(\frac{q_i}{{p}^n_i}\Big) + c\Big(\sum_{i=1}^nq_i -1\Big).
\end{eqnarray*}
Then
\begin{eqnarray*}
f^{\star}(\delta) & = & \max_c \min_q L(q,\,c;\,\delta) \\
& = & \max_c \min_q\Big\{\sum_{i=1}^nq_i f_i + \frac{1}{\delta}\sum_{i=1}^n {p}^n_i \phi\Big(\frac{q_i}{{p}^n_i}\Big) + c\Big(\sum_{i=1}^nq_i -1\Big)\Big\},
\end{eqnarray*}
where the minimizer in the second equality
\begin{eqnarray*}
q(c;\,\delta) & = & \argmin_q\Big\{\sum_{i=1}^nq_i f_i + \frac{1}{\delta}\sum_{i=1}^n {p}^n_i \phi\Big(\frac{q_i}{{p}^n_i}\Big) + c\sum_{i=1}^nq_i \Big\} \\ &  = & [q_1(c;\,\delta),\cdots,\,q_n(c;\,\delta)],
\end{eqnarray*}
where
\begin{eqnarray*}
q_i(c;\,\delta) = \argmax_{q_i} \Big\{\frac{q_i}{{p}^n_i}\delta\big[-f_i-c\big]-\phi\Big(\frac{q_i}{{p}^n_i}\Big)\Big\}= {p}^n_i\,[\phi']^{-1}\Big(\delta\big[-f_i-c\big]\Big).
\end{eqnarray*}
By the Inverse Function Theorem, $[\phi']^{-1}(\zeta)$ is continuously differentiable in a neighborhood of $\zeta=0$ since $\phi'(1)=0$ and $\phi''(1)>0$. Note too that if $z(\zeta)=\phi^{-1}(\zeta)$, or equivalently, the solution of $\phi'(z(\zeta))=\zeta$, so $\phi''(z(\zeta))z'(\zeta)=1$ and hence
\begin{eqnarray*}
[\phi^{-1}]'(\zeta)=\frac{1}{\phi''(z(\zeta))}
\end{eqnarray*}
and
\begin{eqnarray}
\phi^{-1}(\zeta)=1 + \frac{\zeta}{\phi''(0)} + o(\zeta).
\label{eq:phiinv-ts}
\end{eqnarray}

Consider now the outer problem in the definition of $f^{\star}$. Observing that
\begin{eqnarray*}
\min_q\Big\{\sum_{i=1}^nq_i f_i + \frac{1}{\delta}\sum_{i=1}^n {p}^n_i \phi\Big(\frac{q_i}{{p}^n_i}\Big) + c\Big(\sum_{i=1}^nq_i -1\Big)\Big\}
= -\frac{1}{\delta}\sum_{i=1}^n {p}^n_i \phi^*\Big(\delta\big[-f_i-c\big]\Big)-c,
\end{eqnarray*}
where
\begin{eqnarray}
\phi^*(\zeta) = \max_z \Big\{\zeta z - \phi(z)\Big\}
\label{eq:convconj}
\end{eqnarray}
is the convex conjugate of $\phi$,
the solution of the outer problem is
\begin{eqnarray}
c^{\star}(\delta) =  \argmax_c \Big\{-\frac{1}{\delta}\sum_{i=1}^n {p}^n_i \phi^*\Big(\delta\big[-f_i-c\big]\Big)-c\Big\}.
\end{eqnarray}
Since the optimizer  $z(\zeta) = [\phi']^{-1}(\zeta)$ in the definition of \eqref{eq:convconj} is continuously differentiable in a neighbourhood of $\zeta=0$, it follows that
\begin{eqnarray*}
\phi^*(\zeta) = z(\zeta)\zeta - \phi(z(\zeta))
\end{eqnarray*}
and $\phi^*(\zeta)$ is differentiable in a neighborhood of $\zeta=0$ with
\begin{eqnarray*}
[\phi^*]'(\zeta) & = & z'(\zeta)\zeta + z(\zeta)-\phi'(z(\zeta))z'(\zeta) \\
& = & z'(\zeta)\big[\zeta-\phi'(z(\zeta))\big]+z(\zeta) \\
& = & z(\zeta),
\end{eqnarray*}
where the second equality follows for the first-order conditions that define $z(\zeta)$.
This implies that $[\phi^*]'(\zeta)$ is also continuously differentiable in a neighborhood of $\zeta=0$ and
\begin{eqnarray*}
[\phi^*]''(\zeta) = z'(\zeta) = \frac{1}{\phi''(z(\zeta))}.
\end{eqnarray*}
Therefore, $\phi^*(\zeta)$ is twice continuously differentiable in a neighborhood of $\zeta=0$ with
\begin{align*}
\left[\phi^*\right]'(0)  & = \left[\phi'\right]^{-1}(0)=1, \\
\left[\phi^*\right]''(0) & = \frac{1}{\phi''(0)},
\end{align*}
so
\begin{eqnarray*}
\phi^*(\zeta)=\zeta + \frac{1}{\phi''(0)}\zeta^2+o(\zeta^2).
\end{eqnarray*}
Returning to $c^{\star}(\delta)$, observe that the first-order conditions corresponding to \eqref{eq:c*} are
\begin{eqnarray}
\sum_{i=1}^n{p}^n_i[\phi^*]'\Big(\delta\big[-f_i-c\big]\Big)-1 = 0,
\label{foc_temp}
\end{eqnarray}
when $\delta>0$. Alternatively, we can define
\begin{eqnarray*}
g(\delta,\,c)
:= \left\{
\begin{array}{ll}
\begin{displaystyle}-\frac{\phi''(1)}{\delta}\Big\{\sum_{i=1}^n{p}^n_i[\phi^*]'\Big(\delta\big[-f_i-c\big]\Big)-1\Big\}, \end{displaystyle} & \delta>0,  \\[5pt]
{\mathbb E}_{{\mathbb P}_n}[f]+c, & \delta=0.
\end{array}\right.
\end{eqnarray*}
Solving \eqref{foc_temp} when $\delta>0$ is equivalent to finding $c^{\star}(\delta)$ such that
\begin{eqnarray}
g(\delta,\,c^{\star}(\delta)) = 0.
\label{eq:fixedpoint}
\end{eqnarray}
One convenient property of $g(\delta,\,c)$ is that it is continuous in a neighborhood of $\delta=0$ for every $c$. In particular, for every fixed $c$
\begin{eqnarray}
g(\delta,\,c) = \sum_{i=1}^n {p}^n_i\big(f_i +c\big) + O(\delta).
\label{eq:fixedpoint-expansion}
\end{eqnarray}
We also have that $(\delta,\,c)=(0,\,-{\mathbb E}_n[f])$ is the solution of
$g(\delta,\,c)=0$. Observing that $g(c,\,\delta)$ is continuously differentiable in $(\delta,\,c)$ in a neighbourhood of $(\delta,\,c)=(0,\,-{\mathbb E}_n[f])$, and
\begin{eqnarray*}
\nabla_c g(0,\,-{\mathbb E}_{{\mathbb P}_n}[f]) = 1,
\end{eqnarray*}
it follows from the Implicit Function Theorem that the solution $c^{\star}(\delta)$ of the equation \eqref{eq:fixedpoint}, and hence \eqref{foc_temp} is continuously differentiable in a neighborhood of $\delta=0$ with $c^{\star}(0)=-{\mathbb E}_{{\mathbb P}_n}[f]$.

Continuous differentiability of both $c^{\star}(\delta)$ and  $[\phi']^{-1}(\zeta)$ implies that $q_i^*(\delta)$ defined by \eqref{eq:qi*} is continuously differentiable in a neighborhood of $\delta=0$. Since $c^{\star}(\delta)=-{\mathbb E}_{{\mathbb P}_{n}}[f] + O(\delta)$, it follows from the expansion \eqref{eq:phiinv-ts} that \eqref{eq:Q-expansion} holds. 
\end{proof}

\subsection*{Proof of Theorem \ref{thm:sensitivity}}
It follows from \eqref{eq:Q-expansion} that
\begin{align*}
{\mathbb E}_{{\mathbb Q}(\delta)}[f] & = \sum_{i=1}^nq_i(\delta)f_i \\
& = {\mathbb E}_{{\mathbb P}_n}[f]-\frac{\delta}{\phi''(1)}\sum_{i=1}^n{p}^n_i\big(f_i-{\mathbb E}_{{\mathbb P}_n}[f]\big)f_i +o(\delta)\\
& = {\mathbb E}_{{\mathbb P}_n}[f]-\frac{\delta}{\phi''(1)}{\mathbb V}_{{\mathbb P}_n}[f] + o(\delta),
\end{align*}
from which our result follows.

\section{Relationship between the constrained and penalty formulations of the DRO model}
\label{sec:significance}

\subsection*{Relationship between $\epsilon$ and $\delta$} Suppose that ${\mathbb P}_n$ is a given nominal model. Convex duality implies that the solution of the constrained in-sample problem \eqref{eq:constrained} is the solution of the optimization problem  (see \eqref{eq:const-x}--\eqref{eq:const-delta})
\begin{eqnarray*}
(x_n(\epsilon),\,\delta_n(\epsilon))
&  = & \argmax_{\delta\geq 0, \, x} \Big\{ \min_{\mathbb Q}{\mathbb E}_{\mathbb Q}[f(x,\,Y)] + \frac{1}{\delta}{\mathcal H}_\phi({\mathbb Q}\,|\,{\mathbb P}_n)- \frac{\epsilon}{\delta}\Big\} \\
& = & \argmax_{\delta\geq 0,\, x} \Big\{\mathbb{E}_{{\mathbb P}_n}\big[f(x,Y)\big] - \frac{\delta}{2\phi''(1)}\mathbb{V}_{{\mathbb P}_n}\big[f(x,\,Y)\big]  - \frac{\epsilon}{\delta} +  o(\delta) \Big\}
\end{eqnarray*}
where the mean-variance approximation is given in \eqref{eq:robust-mv-general} and applies when $\delta$ is small. Writing $x_n(\epsilon)$ and $\delta_n(\epsilon)$ in orders of $\sqrt{\epsilon}$ it can be shown that
\begin{eqnarray*}
x_n(\epsilon) & = & x_n(0) + \sqrt{\frac{2\phi{''}(1)\epsilon}{{\mathbb V}_{{\mathbb P}_n}[f(x_n(0),\,Y)]}}\,\pi_n+ o(\sqrt{\epsilon}),\\
\delta_n(\epsilon) &= & \sqrt{\frac{2\phi{''}(1)\epsilon}{{\mathbb V}_{{\mathbb P}_n}[f(x_n(0),\,Y)]}}+o(\sqrt{\epsilon}),
\end{eqnarray*}
where $x_n(0)$ is the SAA solution under the nominal distribution ${\mathbb P}_n$ for the in-sample problem, $Y\sim {\mathbb P}_n\equiv [p_1^n,\cdots,\,p_n^n]$, and
\begin{eqnarray*}
\pi_n =  \frac{1}{\phi{''}(1)}\left({\mathbb E}_{{\mathbb P}_n}[\nabla^2_x f(x_n(0),\,Y)]\right)^{-1}\mathrm{Cov}_{\mathbb{P}_n}\big[f(x,\,Y),\,\nabla_x f(x_n(0),\,Y)\big].
\end{eqnarray*}
This implies that the threshold $\epsilon$ corresponding to the robustness parameter $\delta$ is approximately
\begin{eqnarray*}
\epsilon = \frac{{\mathbb V}_{{\mathbb P}_n}[f(x_n(0),\,Y)]}{2\phi{''}(1)}\delta^2.
\end{eqnarray*}

\subsection*{Probabilistic interpretation}

The distribution of ${\mathcal H}_\phi({\mathbb Q}\,|\,{\mathbb P}_n)$ can  be approximated by bootstrapping data $\tilde{Y}_1,\cdots,\,\tilde{Y}_n$ from the nominal distribution ${\mathbb P}_n$ and computing the empirical distribution ${\mathbb Q}=[q_1,\cdots,\,q_n]$ for each such data set. This allows us to interpret $\epsilon$ in \eqref{eq:constrained} as the $1-\alpha$ quantile of the distribution of ${\mathcal H}_\phi({\mathbb Q}\,|\,{\mathbb P}_n)$ where $\alpha$ is such that
\begin{eqnarray}
{\mathbb P}_n\Big[{\mathcal H}_\phi({\mathbb Q}\,|\,{\mathbb P}_n) \leq \epsilon\Big] \approx 1-\alpha.
\label{eq:prob}
\end{eqnarray}

As seen from our examples in Section \ref{sec:applications}, it is difficult to estimate the associated quantiles $1-\alpha$ from simulation when $\delta$ or $\epsilon$ are in the desirable part of the robust mean-variance frontier. If one still wishes to estimate $\alpha$ for a particular $\delta$ or $\epsilon$, we can use the asymptotic properties of $\phi$-divergence if $\phi(z)$ is sufficiently regular and the support of the data generating mechanism $\mathbb P$ is finite (see \cite{ben2013}). Under these conditions
\begin{eqnarray*}
\frac{2n}{\phi''(1)} {\mathcal H}_\phi({\mathbb Q}\,|\,{\mathbb P}_n) ~\sim \chi^2_{n-1}
\end{eqnarray*}
(the $\chi^2$ distribution with $n-1$ degrees of freedom) so \eqref{eq:prob} is satisfied if $\epsilon$ and $\alpha$ are such that $2n\epsilon = \chi^2_{n-1,\,1-\alpha}$, the $1-\alpha$ quantile of $\chi^2_{n-1}$. If our goal is to ``optimize" the trade-off between the out-of-sample mean and sensitivity of the reward, however, there seems to be little reason to formulate and/or interpret uncertainty sets through confidence levels $\alpha$.

\end{document}